\renewcommand{\L}{\mathcal{L}}
\newcommand{\M}{\mathcal{M}}
\newcommand{\N}{\mathcal{N}}
\renewcommand{\P}{\mathcal{P}}
\newcommand{\pl}{\text{\bf P}}
\newcommand{\Ord}{\text{\bf Ord}}
\newcommand{\Card}{\text{\bf Card}}
\newcommand{\InfCard}{\text{\bf InfCard}}
\newcommand{\STML}{\text{\tt STML}}
\newcommand{\DTML}{\text{\tt DTML}}
\theoremstyle{thmstyleone}%
\newtheorem{theorem}{Theorem}%
\newtheorem{proposition}[theorem]{Proposition}%
\newtheorem{lemma}[theorem]{Lemma}%
\newtheorem{corollary}[theorem]{Corollary}%
\theoremstyle{thmstyletwo}%
\newtheorem{example}{Example}%
\newtheorem{remark}{Remark}%
\theoremstyle{thmstylethree}%
\newtheorem{definition}{Definition}%
\newtheorem{assumption}{Assumption}%
\begin{document}

\title[Transfinite Modal Logic for Bayesian Reasoning]{Transfinite Modal Logic: a Semi-quantitative Explanation for Bayesian Reasoning}


\author*[1]{\fnm{Xinyu} \sur{Wang}}\email{s2010404@jaist.ac.jp}

\affil*[1]{\orgdiv{School of Computer Science}, \orgname{Japan Advanced Institute of Science and Technology}, \orgaddress{\street{Asahidai 1--1}, \city{Nomi City}, \postcode{923--1211}, \state{Ishikawa Prefecture}, \country{Japan}}}

\abstract{Bayesian reasoning plays a significant role both in human rationality and in machine learning. In this paper, we introduce transfinite modal logic, which combines modal logic with ordinal arithmetic, in order to formalize Bayesian reasoning semi-quantitatively. Technically, we first investigate some nontrivial properties of ordinal arithmetic, which then enable us to expand normal modal logic's semantics naturally and elegantly onto the novel transfinite modal logic, while still keeping the ordinary definition of Kripke models totally intact. Despite all the transfinite mathematical definition, we argue that in practice, this logic can actually fit into a completely finite interpretation as well. We suggest that transfinite modal logic captures the essence of Bayesian reasoning in a rather clear and simple form, in particular, it provides a perfect explanation for Sherlock Holmes' famous saying, ``When you have eliminated the impossible, whatever remains, however improbable, must be the truth.'' We also prove a counterpart of finite model property theorem for our logic.}

\keywords{modal logic, ordinal arithmetic, Bayesian reasoning, Kripke model, regular cardinal, Sherlock Holmes}

\maketitle



\newpage

\section{Introduction}

Bayes' theorem, in spite of its seemingly simple mathematical form, not only is undoubtedly one of the most essential foundations of statistics, but also possesses extremely profound influence onto every corner of modern science~\cite{Freedman07}. The rule itself establishes one of the most popular philosophical interpretations of probability, namely Bayesian probability, together with the associative approach of Bayesian statistics~\cite{deFinetti17}; it has been successfully applied so as to construct a universal science theory, the Bayesian philosophy of science~\cite{Sprenger19}; and the recent boost of machine learning in AI indispensably relies on Bayes' rule as both theoretical basis and practical guidance~\cite{Bishop06}.

Logic, on the other hand, classically deals with prescribed truth values without any room for randomness~\cite{vanDalen04}. It may seem totally irrelevant to probability. Nonetheless, if we admit that both probability and logic --- or even only either one out of them --- is a universal feature of human being's intelligence, then these two flows of theories must intersect at some point~\cite{Audi98}. Admittedly, nowadays various types of probabilistic logics already exist~\cite{Nilsson86,Gilio05,Ognjanovic08,Nguyen19}, which \textit{extrinsically} merge the notion of probability into logical systems, for instance, by extending the possible space of a formula's truth value from only \textit{True} and \textit{False} to ranging upon all the real numbers between $0$ and $1$~\cite{Nilsson93,Biacino02,Dautovic21}. These kinds of logics could indisputably become useful for their own purposes in practice, though from a metaphysical perspective, they do not seem adequate for drawing a solid conclusion upon fundamental relation between logic and probability. Ontologically, if logic and probability are not two absolutely separated noumena, but instead constitutionally connected with each other, then we should reasonably expect that one notion can \textit{intrinsically} arise from the other, without the need to \textit{externally} introduce any new concept~\cite{Fitting04}.

After all, logic and probability are merely relevant to each other, but not inclusive by any means, thus generally speaking, we cannot precisely capture one concept entirely within the other~\cite{Hodges97}. Indeed from the technical aspect, the ordinary Kolmogorov probability axioms~\cite{Durrett19}, which comprehensively regulate every quantitative property of probability, do not seem very feasible to be fully deduced out solely based on some usual logical system. In this paper, however, we expand normal modal logic through combination with ordinal arithmetic and yield a novel kind of transfinite modal logic, whose interpretation provides a very interesting semi-quantitative explanation for Bayesian reasoning. Our model still keeps exactly the same as the traditional Kripke model, hence nothing new gets added substantially and thus we actually succeed in characterizing a fragment of the notion of probability \textit{internally} from modal logic. Quite roughly speaking, probability may also be viewed as a sort of modality, which essentially embeds itself inside normal modal logic~\cite{Pleasants09}: most commonly, $\Box\phi$ reads as ``$\phi$ is necessary'', but strictly speaking nothing is $100\%$ necessary and so the reading may sound closer to ``$\phi$ is \textit{almost} necessary with possibility near to $1$''~\cite{Fan14}. We shall further elaborate on our whole intuition in the rest of this paper.

Technically speaking, our transfinite modal logic depends on a few nontrivial results about ordinal arithmetic, which we will detailedly develop and prove in Section~\ref{sec.math} as mathematical preliminaries before we introduce the whole definition in Section~\ref{sec.dtml}. Later on we will also prove a counterpart of finite model property theorem in Section~\ref{sec.fmp}, which further consummates and justifies our logic. Although as its name suggests, transfinite modal logic involves infinite ordinals, we argue that practically it can also be interpreted in a finite manner. In Section~\ref{sec.det} we will demonstrate how transfinite modal logic is able to naturally explain Bayesian reasoning through a vivid example of Sherlock Holmes.

The rest of this paper is organized as follows: Section~\ref{sec.not} stipulates relevant notational conventions throughout this paper; Section~\ref{sec.stml} introduces simple transfinite modal logic as a sublanguage and then applies it to interpret probability; Section~\ref{sec.dif} analyzes in detail how we should expand the modal language so as to express dynamic Bayesian reasoning; Section~\ref{sec.math} develops a few nontrivial mathematical results about ordinal arithmetic, as a preparation for defining duplex transfinite modal logic in the following Section~\ref{sec.dtml}; Section~\ref{sec.det} demonstrates how Bayesian reasoning gets semi-quantitatively formalized by transfinite modal logic with a well known example about Sherlock Holmes; Section~\ref{sec.fmp} proves a counterpart of finite model property theorem for the logic, which also solidly guarantees a coherent theoretical background; Section~\ref{sec.con} at last briefly concludes the entire paper. Besides rigorous mathematical proofs for those pivotal theorems, a handful of vivid examples as well as careful remarks also scatter throughout this paper. Just as a piece of friendly advice, we think examples and remarks are extremely helpful for explaining our intuition behind the logic as clearly as possible, therefore, particular attention on those texts is recommended for the sake of a smooth understanding.

\section{Notational Conventions}\label{sec.not}

In this paper, we assume ZFC set theory~\cite{Jech03} and adopt the following notations:
\begin{itemize}
    \item $\subseteq$ is for subset (subclass) and $\subset$ is for proper subset (subclass).
    \item $\Ord$ is the class of ordinals.
    \item $\Card$ is the class of cardinals. $\Card\subset\Ord$.
    \item $\InfCard\subset\Card$ is the class of infinite cardinals. $\InfCard$ is isomorphic to $\Ord$, therefore, each $\aleph_\alpha\in\InfCard$ is uniquely indexed by each $\alpha\in\Ord$.
    \item Lowercase Greek letters $\alpha,\beta,\gamma,\zeta,\eta,\theta,\rho,\sigma,\tau$ are used for representing ordinals; $\kappa,\lambda$ for representing cardinals; $\phi,\psi$ for representing formulae.
    \item All kinds of arithmetic, including addition ($+$), multiplication ($\cdot$) and exponentiation, are ordinal arithmetic. (i.e., they should never be understood as cardinal arithmetic, even if both operands are cardinals.)
\end{itemize}

\section{Simple Transfinite Modal Logic}\label{sec.stml}

To start with, we define simple transfinite modal logic in this section. Syntactically, simple transfinite modal logic bears exactly the same language as normal modal logic, while semantically its only modification is that, when evaluating formula $\Box\phi$ on a Kripke model, cardinality gets taken into consideration just by the most natural way that readers could possibly think of.
\begin{definition}[Language $\STML$]
Let $\pl$ be a non-empty set of propositions. Language $\STML$ is recursively defined by the following Backus-Naur form, where $p\in\pl$:
\begin{align*}
    \phi::=p\mid\neg\phi\mid(\phi\land\phi)\mid\Box\phi
\end{align*}
\end{definition}
\begin{definition}[Kripke Model]\label{def.mod}
A Kripke model $\M$ is a triple $(S,R,V)$ where:
\begin{itemize}
    \item $S$ is a non-empty set of possible worlds.
    \item $R\subseteq S\times S$ is a binary relation over $S$.
    \item $V:\pl\to\P(S)$ is the valuation function.
\end{itemize}
A pointed (Kripke) model $\M,s$ is a Kripke model $\M$ with a possible world $s\in S$.
\end{definition}
\begin{definition}[Successor Cardinality of Pointed Model]\label{def.scar}
For any pointed model $\M,s$, we define its successor cardinality as $\kappa_s=\mid\{t\in S:sRt\}\mid$.
\end{definition}
\begin{remark}\label{rem.scar}
Intuitively, ``successor cardinality of pointed model $\M,s$'' in Definition~\ref{def.scar} is just the cardinality of successors of $s$.
\end{remark}
\begin{definition}[Semantics of $\STML$]\label{def.ssem}
For any pointed model $\M,s$, semantics of language $\STML$ is recursively defined as the following:
\begin{align*}
    \M,s\vDash p\iff & s\in V(p)\\
    \M,s\vDash\neg\phi\iff & \text{not }\M,s\vDash\phi\\
    \M,s\vDash\phi\land\psi\iff & \M,s\vDash\phi\text{ and }\M,s\vDash\psi\\
    \M,s\vDash\Box\phi\iff & \left\{\begin{array}{llll}
        \mid\{t\in S:sRt\land(\M,t\vDash\phi)\}\mid> &\\
        \mid\{t\in S:sRt\land\neg(\M,t\vDash\phi)\}\mid, & & & \text{if }\kappa_s\in\InfCard\\
        \\
        \forall t\in S\text{ such that }sRt,\M,t\vDash\phi, & & & \text{otherwise}
    \end{array}\right.
\end{align*}
\end{definition}
Readers should easily reason out that intuitively, semantics of $\Box\phi$ in Definition~\ref{def.ssem} just says that when $s$ has infinite many successors, $\Box\phi$ holds on $s$ if and only if successors on which $\phi$ holds are more than successors on which $\phi$ does not hold, namely, $\phi$ holds on \textit{almost} all the successors and so is \textit{almost} necessary. The following Example~\ref{ex.sim} concisely demonstrates such a case.
\begin{example}\label{ex.sim}
In this Kripke model $\M_{\ref{ex.sim}}$, from $s$ there is only one successor on which $\neg p$ holds but $\aleph_0$ many successors on which $p$ holds. Therefore we have $\M_{\ref{ex.sim}},s\vDash\Box p$, simply ignoring the existence of that single $\neg p$ successor as it is \textit{almost} impossible.
$$\xymatrix@R=25pt@C=10pt{
\M_{\ref{ex.sim}} & & & & s\ar@{->}[dllll]\ar@{->}[dlll]\ar@{->}[dll]\ar@{->}[drr]\ar@{->}[drrr] & & &\\
\neg p & p & p & \cdots & \aleph_0 & \cdots & p & p
}$$
\end{example}
\begin{remark}\label{rem.fin}
A finitist might argue that in reality, every physical object is finite and thus such a model in Example~\ref{ex.sim} should not be acceptable~\cite{Ye11}. However, this paper does not intend to survey on metaphysical problems in mathematical philosophy, and here we make use of infinite cardinality simply as a convenient tool for theoretical beauty; in fact as for practical usage, we think it is perfectly okay to interpret $\aleph_0$ as certain sufficiently large natural number $N\in\omega$, such that any common natural number, for example $5$, $10$ or even $100$, is much smaller than $N$. Of course, it depends on the specific situation in application to determine the actual upper range of a ``common'' natural number, as well as how large $N$ must be in order to be counted as ``sufficiently'' large. Furthermore, if $\aleph_1$ also appears in the Kripke model, then it can be interpreted as an even larger natural number $N'\in\omega$, such that the result of any common elementary arithmetic on $N$, for example $2N(=N+N)$ or $N^2(=N\cdot N)$, is still much smaller than $N'$.

As an analogy, readers are also suggested to recall the big O notation, which is now commonly used for denoting the computational complexity of an algorithm~\cite{Sipser06}. Although in principal, the input scale of any computer algorithm must be finite during a single run, the big O notation well captures the algorithm's approximate behaviour as long as the input scale is large enough so as to be reasonably considered as \textit{almost} infinite. Our idea here simply resembles such an intuition. In short, we would recommend just freely choosing to understand $\aleph_0$ in this paper at the ontological aspect either as actual infinity or as certain sufficiently large natural number, whichever could make readers more comfortable.
\end{remark}

\section{Toward Modal Logic for Bayesian Reasoning}\label{sec.dif}

Now that we have defined simple transfinite modal logic in Section~\ref{sec.stml}, where $\Box\phi$ holds on a possible world $s$ with infinite many successors if and only if $\phi$ holds on \textit{almost} all the successors of $s$, in other words, the set of successors of $s$ on which $\phi$ does not hold must be smaller in cardinality and thus comparatively negligible. Literally, this intuition is so simple. Nevertheless, our final goal is to explain Bayesian reasoning with modal logic, indicating that we have to deal with something like dynamic belief revision~\cite{Baltag08,vanBenthem09}, for instance, $\Box\phi$ holds a priori and hence $\phi$ is \textit{almost} necessary as the prior probability distribution, but then upon obtaining new evidence, the posterior probability distribution gets calculated according to Bayes' rule and so $\Box\phi$ may not hold any more. To provide a more specific context, suppose you have bought a lottery and $\phi$ stands for that you do not win the first prize, then a priori assertion of $\Box\phi$ could be reasonably drawn, but if you later check the result and (very luckily!) find out that you really do win the first prize, then the default $\Box\phi$ should now be rationally abandoned by Bayesian reasoning.

Hence here follows the natural question: how should we model the dynamics of Bayesian reasoning? Readers might quickly think of public announcement logic, where update of information is achieved through deleting unwanted possible worlds form the original Kripke model~\cite{vanDitmarsch08}. Nevertheless, such kind of approach now faces a serious endogenous difficulty: by deleting possible worlds, the Kripke model inevitably gets smaller and smaller, but more complicated new structures can never emerge. This feature may not present to be a big problem for public announcement logic, which is intended to deal with knowledge about the ultimately certain truth, however, it does not seem to fit in well with Bayesian belief that could arbitrarily invert for (potentially) infinite many times.

Therefore, we perceive that another natural approach is more plausible: just like temporal logic for discrete time~\cite{Goldblatt92}, the Kripke model is generally stratified as a tree, and exploration over the next layer corresponds to receiving new information and then adjusting the posterior probability distribution by Bayesian reasoning. Intuitively, such modelling indeed sounds quite similar to human being's natural reasoning process. The following Example~\ref{ex.fin} demonstrates such intuition with a very simple finite model:
\begin{example}\label{ex.fin}
In this Kripke model $\M_{\ref{ex.fin}}$, by default namely among children of $s$, there exist two possible worlds $t_1$ and $t_2$ on which $p$ holds but only one possible world $t_0$ on which $\neg p$ holds, hence representing that the prior probability distribution is $\Pr[p]=\frac{2}{3}$ and $\Pr[\neg p]=\frac{1}{3}$. However among grandchildren of $s$, namely after one round of Bayesian update, we find out that there exist four evidences to support $t_0$ but for either $t_1$ or $t_2$ there is only one evidence, therefore, the posterior probability distribution should be $\Pr_\text{post}[p]=\frac{2}{6}=\frac{1}{3}$ and $\Pr_\text{post}[\neg p]=\frac{4}{6}=\frac{2}{3}$. The intuitive explanation being such in natural language, however as a clarification, we do not expect our transfinite modal logic expressive enough to be able to calculate out specific numerical values of probability such as $\frac{1}{3}$ or $\frac{2}{3}$.
$$\xymatrix@R=25pt@C=25pt{
\M_{\ref{ex.fin}} & & & s\ar@{->}[dll]\ar@{->}[dr]\ar@{->}[drr] & &\\
& t_0:\neg p\ar@{->}[dl]\ar@{->}[d]\ar@{->}[dr]\ar@{->}[drr] & & & t_1:p\ar@{->}[d] & t_2:p\ar@{->}[d]\\
\neg p & \neg p & \neg p & \neg p & p & p
}$$
\end{example}
Take Example~\ref{ex.fin} as a reference, we are tempted to analogize that for transfinite modal logic, since formula $\Box\phi$ expresses that $\phi$ is \textit{almost} necessary as of the prior probability distribution, that $\phi$ is \textit{almost} necessary as of the posterior probability distribution should then be expressed by formula $\Box\Box\phi$ or something alike. This overall direction does not seem too problematic, however, there remain a few essential technical details with complexity, which require our very careful examination. Firstly let us consider the following Example~\ref{ex.inf}:
\begin{example}\label{ex.inf}
In this Kripke model $\M_{\ref{ex.inf}}$, the prior probability distribution is that there are $\aleph_0$ many successors that satisfy $p$ while $\neg p$ only holds on a single successor $t_0$, hence $p$ is a priori \textit{almost} necessary, namely $\M_{\ref{ex.inf}},s\vDash\Box p$. However later on, we newly acquire as many as $\aleph_0$ subsequent evidences to support exactly the very branch of possibility $t_0$, while for any other successor of $s$ only one following evidence gets provided. Hence intuitively, by Bayesian reasoning we should infer that based on current information, the posterior probability of $\neg p$ has now become comparable to probability of $p$ and thus the possibility of $\neg p$ can no longer be neglected as \textit{almost} impossible.
$$\xymatrix@R=25pt@C=2.5pt{
\M_{\ref{ex.inf}} & & & & & & & s\ar@{->}[dllll]\ar@{->}[d]\ar@{->}[dr]\ar@{->}[drrrrr]\ar@{->}[drrrrrr] & & &\\
& & & t_0:\neg p\ar@{->}[dlll]\ar@{->}[dll]\ar@{->}[drr]\ar@{->}[drrr] & & & & t_1:p\ar@{->}[d] & t_3:p\ar@{->}[d] & \cdots & \aleph_0 & \cdots & t_4:p\ar@{->}[d] & t_2:p\ar@{->}[d]\\
\neg p & \neg p & \cdots & \aleph_0 & \cdots & \neg p & \neg p & p & p & \cdots & & \cdots & p & p
}$$
Such an intuition sounds quite plausible, but then how may we formally express the above analysis by a modal logic formula? Notice that however, according to the semantics defined in Definition~\ref{def.ssem}, now we still have $\M_{\ref{ex.inf}},s\vDash\Box\Box p$; in fact from the viewpoint of $s$, $\Box\Box p$ differs from $\Box p$ in no way at all, as actually for any successor $t\in S$ such that $sRt$, $\M_{\ref{ex.inf}},t\vDash\Box p$ if and only if $\M_{\ref{ex.inf}},t\vDash p$. If we take a closer look into this issue, we may be able to recognize that roughly speaking, what we really want to mean is not $\Box\Box p$, which essentially means $\Box(\Box p)$ as its standard valuating order; instead, we want to mean $(\Box\Box)p$, that is to say, firstly view $(\Box\Box)$ as an integrated modality corresponding to the combined relation $(R\circ R)$. In fact, this exchange of valuating order does not make any difference for normal modal logic, but here it matters. For the sake of clarity, we shall denote this bundled modality as a single notation $\boxbox$ in the rest of this paper. We would also like to suggest denoting $\boxbox$ alternatively as $\Box^2$, a notation that can be naturally generalized onto $\Box^3$ and so on, which are capable of describing multiple steps of Bayesian reasoning. In this paper nevertheless, we prefer neatly focusing on the general characters of transfinite modal logic rather than diving into cumbersome details, so we shall restrict our following discussion in the rest of this paper only up to $\Box^2$, i.e., $\boxbox$. The technique for $\Box^3$ and others is simply similar in principle.
\end{example}
In a word, we conclude that in Example~\ref{ex.inf}, we should have $\M_{\ref{ex.inf}},s\vDash\neg\boxbox p$, although semantics of the new modality $\boxbox$ still remains undefined, which constitutes our major task at present. As suggested in Example~\ref{ex.inf}, $\boxbox$ is meant nearly as $(\Box\Box)$ for the combined relation $(R\circ R)$, whose semantics might then be defined in the same way as Definition~\ref{def.ssem}. Such definition approach, however, causes serious flaws in philosophy, as the following Example~\ref{ex.flaw} reveals:
\begin{example}\label{ex.flaw}
In this Kripke model $\M_{\ref{ex.flaw}}$, the root $s$ is connected to $\aleph_0$ many successors, each of which is just the same copy of the pointed model $\M_{\ref{ex.sim}},s$ in Example~\ref{ex.sim}. By symmetry, there is no reason for us to anticipate an outcome different from Example~\ref{ex.sim}, namely, we should expect that $\M_{\ref{ex.flaw}},s\vDash\boxbox p$. However, simply counting the cardinality of grandchildren of $s$ would lead to the opposite statement $\M_{\ref{ex.flaw}},s\vDash\neg\boxbox p$, since both grandchildren on which $p$ holds and grandchildren on which $\neg p$ holds are equally $\aleph_0$ in terms of cardinality. This conclusion is not acceptable, as apparently there are much ``more'' chances of $p$ than $\neg p$ just from harmless intuition. An alternative viewpoint results into a very strong objection as well: as discussed in Remark~\ref{rem.fin}, suppose in practice, $\aleph_0$ is actually interpreted as a synonym of certain natural number $N\in\omega$ such that $N$ is far greater than $1$, but then $N^2$ should also be far greater than $N$, thus the case of $p$ still bears much greater possibility than the case of $\neg p$ and so is indeed \textit{almost} necessary.
$$\xymatrix@R=25pt@C=2.5pt{
\M_{\ref{ex.flaw}} & & & & & & & & s\ar@{->}[dllll]\ar@{->}[drrrr] & & & & & & & &\\
& & & & t_0\ar@{->}[dllll]\ar@{->}[dlll]\ar@{->}[dll]\ar@{->}[drr]\ar@{->}[drrr] & & & & & & & & t_1\ar@{->}[dllll]\ar@{->}[dlll]\ar@{->}[dll]\ar@{->}[drr]\ar@{->}[drrr] & & \cdots & \aleph_0 & \cdots\\
\neg p & p & p & \cdots & \aleph_0 & \cdots & p & p & \neg p & p & p & \cdots & \aleph_0 & \cdots & p & p & \cdots
}$$
\end{example}
Example~\ref{ex.flaw} warns us that, essentially, flaws occur because cardinals are too coarse-grained, so that cardinal-arithmetic square of $\aleph_\alpha$ still equals $\aleph_\alpha$ for any $\aleph_\alpha\in\InfCard$~\cite{Kunen80}. Immediately, we get reminded of ordinals, which are finer-grained than cardinals since $\alpha^2\neq\alpha$ for any $\alpha\in\Ord$ such that $\alpha>1$. It seems that ordinals are good candidate of replacement for cardinals, but \textit{beware}. Unlike cardinals, which can always yield an absolute value for any fixed set, ordinals rely on well orders, which originally do not exist in a Kripke model and we have never planned to extra introduce from elsewhere, either. Such fact raises the major hurdle. Later on in Section~\ref{sec.dtml}, we shall rigorously define the semantics of modality $\boxbox$, and the accordingly expanded modal language is called duplex transfinite modal logic. Notably, the Kripke model still keeps its original basic form as in Definition~\ref{def.mod}. No extra well orders are introduced into the Kripke model, but we manage to resolve the flaw in Example~\ref{ex.flaw} with ingenious ordinal arithmetic. Before introducing this duplex transfinite modal logic, nonetheless, several crucial nontrivial results on ordinal arithmetic have to be firstly established in the following Section~\ref{sec.math}.

\section{Ordinal Arithmetic}\label{sec.math}

\subsection{Ordinal Logarithm}

\begin{lemma}\label{lm.car}
For any $\alpha,\beta\in\Ord$, if $0<\beta<\aleph_\alpha$, then $\beta+\aleph_\alpha=\beta\cdot\aleph_\alpha=\aleph_\alpha$.
\end{lemma}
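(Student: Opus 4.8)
The plan is to handle both equalities by a single mechanism. Since $\aleph_\alpha$ is an infinite cardinal it is in particular a limit ordinal, so ordinal addition and multiplication are continuous in their right argument: $\beta+\aleph_\alpha=\sup_{\xi<\aleph_\alpha}(\beta+\xi)$ and $\beta\cdot\aleph_\alpha=\sup_{\xi<\aleph_\alpha}(\beta\cdot\xi)$. It then suffices to show that every term of the relevant supremum stays strictly below $\aleph_\alpha$ (which yields ``$\le\aleph_\alpha$'') and that the supremum is at least $\aleph_\alpha$ (which yields ``$\ge\aleph_\alpha$''). Before that I would record two elementary facts. First, for an ordinal $\delta$ one has $\delta<\aleph_\alpha$ if and only if $\delta$ has cardinality strictly below $\aleph_\alpha$, because $\aleph_\alpha$ is the least ordinal of its own cardinality; in particular the hypothesis $\beta<\aleph_\alpha$ gives $\mid\beta\mid<\aleph_\alpha$, and every $\xi<\aleph_\alpha$ likewise satisfies $\mid\xi\mid<\aleph_\alpha$. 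Second, since $\aleph_\alpha$ is a genuine \emph{cardinal}, the cardinal sum and the cardinal product of any two cardinals each below $\aleph_\alpha$ is again below $\aleph_\alpha$ (for two infinite operands both operations coincide with the larger one; if an operand is finite the bound is immediate).

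For the additive identity, fix $\xi<\aleph_\alpha$. As an order type, $\beta+\xi$ is the concatenation of $\beta$ and $\xi$, so its underlying set has cardinality the cardinal sum of $\mid\beta\mid$ and $\mid\xi\mid$, hence below $\aleph_\alpha$; by the first fact above, $\beta+\xi<\aleph_\alpha$. Taking the supremum over all $\xi<\aleph_\alpha$ gives $\beta+\aleph_\alpha\le\aleph_\alpha$. For the reverse inequality I would use monotonicity of ordinal addition in the left argument: $\beta+\xi\ge 0+\xi=\xi$ for every $\xi$, so $\beta+\aleph_\alpha\ge\sup_{\xi<\aleph_\alpha}\xi=\aleph_\alpha$. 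Hence $\beta+\aleph_\alpha=\aleph_\alpha$.

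The multiplicative identity goes through in parallel, now genuinely using $\beta>0$. Fix $\xi<\aleph_\alpha$; as an order type, $\beta\cdot\xi$ consists of $\xi$ consecutive copies of $\beta$, so its underlying set has cardinality the cardinal product of $\mid\xi\mid$ and $\mid\beta\mid$, again below $\aleph_\alpha$; thus $\beta\cdot\xi<\aleph_\alpha$, and taking the supremum gives $\beta\cdot\aleph_\alpha\le\aleph_\alpha$. Since $\beta\ge 1$, monotonicity of multiplication in the left argument gives $\beta\cdot\xi\ge 1\cdot\xi=\xi$, so $\beta\cdot\aleph_\alpha\ge\sup_{\xi<\aleph_\alpha}\xi=\aleph_\alpha$, and therefore $\beta\cdot\aleph_\alpha=\aleph_\alpha$, which together with the previous paragraph completes the proof.

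I do not expect a genuine obstacle here; the delicate points are organizational. One must keep ordinal and cardinal arithmetic strictly apart, as the conventions of Section~\ref{sec.not} demand — the ``cardinal sum/product of small cardinals stays small'' step is a statement about cardinal arithmetic and should be cited as such, not conflated with the ordinal sums and products actually being computed. One must also resist the tempting but false claim that $\aleph_\alpha$ is regular: only the absorption law ``$\mu+\nu=\mu\cdot\nu=\max\{\mu,\nu\}$ for infinite cardinals $\mu,\nu$'' is needed, and that law holds for \emph{all} infinite cardinals, singular ones included. A slightly more high-powered alternative would be to quote that every infinite cardinal is additively and multiplicatively indecomposable (equivalently, of the form $\omega^\gamma$), from which $\beta+\xi<\aleph_\alpha$ and $\beta\cdot\xi<\aleph_\alpha$ would follow immediately; but the cardinality bookkeeping above is entirely self-contained and, I think, cleaner to present in a paper aimed partly at readers who are not set theorists.
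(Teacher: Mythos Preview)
Your proof is correct and follows essentially the same approach as the paper: both use that $\aleph_\alpha$ is a limit ordinal to write $\beta+\aleph_\alpha$ and $\beta\cdot\aleph_\alpha$ as suprema, bound each term below $\aleph_\alpha$ via the cardinal-arithmetic fact that sums and products of cardinals below $\aleph_\alpha$ stay below $\aleph_\alpha$, and obtain the lower bound from monotonicity in the left argument. The paper merely packages the upper-bound step as a proof by contradiction rather than your direct supremum computation, but the ingredients are identical.
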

\begin{proof}
Since $\beta>0$, namely $\beta\geqslant1$, obviously $\beta+\aleph_\alpha\geqslant\aleph_\alpha$ and $\beta\cdot\aleph_\alpha\geqslant\aleph_\alpha$.

To a contradiction suppose $\beta+\aleph_\alpha>\aleph_\alpha$, namely $\aleph_\alpha\in\beta+\aleph_\alpha$. As $\aleph_\alpha$ is an infinite cardinal, it is a limit ordinal, so $\beta+\aleph_\alpha=\bigcup\limits_{\gamma<\aleph_\alpha}(\beta+\gamma)$, and thus there exists $\gamma<\aleph_\alpha$ such that $\aleph_\alpha\in\beta+\gamma$, namely $\beta+\gamma>\aleph_\alpha$, hence $\mid\beta+\gamma\mid\geqslant\aleph_\alpha$. However, since $\aleph_\alpha$ is a cardinal and $\beta,\gamma<\aleph_\alpha$, $\mid\beta\mid,\mid\gamma\mid<\aleph_\alpha$, so $\mid\beta+\gamma\mid=\max\{\mid\beta\mid,\mid\gamma\mid\}<\aleph_\alpha$, a contradiction. Therefore $\beta+\aleph_\alpha=\aleph_\alpha$.

The case for multiplication is just similar.
\end{proof}
\begin{definition}[Relation $\sim_\alpha$]
Let $\alpha\in\Ord$ be fixed. For any $\beta,\gamma\in\Ord$ such that $\beta>0\lor\gamma>0$, define $\beta\sim_\alpha\gamma$ iff $\beta<\gamma\cdot\aleph_\alpha\land\gamma<\beta\cdot\aleph_\alpha$. Besides, let $0\sim_\alpha0$.
\end{definition}
\begin{proposition}
For any fixed $\alpha\in\Ord$, $\sim_\alpha$ is an equivalence relation over $\Ord$.
\end{proposition}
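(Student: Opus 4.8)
The plan is to verify reflexivity, symmetry, and transitivity directly from the definition, treating the zero case separately at each step since the definition stipulates $0\sim_\alpha 0$ by fiat and otherwise requires $\beta>0\lor\gamma>0$. For \emph{reflexivity}: if $\beta=0$ then $0\sim_\alpha 0$ holds by stipulation; if $\beta>0$, I need $\beta<\beta\cdot\aleph_\alpha$, which follows because $\aleph_\alpha$ is an infinite cardinal, hence $\aleph_\alpha>1$, and multiplying a positive ordinal $\beta$ on the left by something strictly greater than $1$ strictly increases it (i.e.\ $\beta\cdot 1=\beta<\beta\cdot\aleph_\alpha$ by strict monotonicity of ordinal multiplication in the right argument). \emph{Symmetry} is immediate: the defining condition $\beta<\gamma\cdot\aleph_\alpha\land\gamma<\beta\cdot\aleph_\alpha$ is visibly symmetric in $\beta$ and $\gamma$, and $0\sim_\alpha 0$ is symmetric; one only has to note that the side condition $\beta>0\lor\gamma>0$ is also symmetric, so no case is lost.

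The substantive step is \emph{transitivity}: assuming $\beta\sim_\alpha\gamma$ and $\gamma\sim_\alpha\delta$, I must derive $\beta\sim_\alpha\delta$. If all three are $0$ this is trivial; if exactly some are $0$, a short check disposes of it (e.g.\ if $\gamma=0$ then $\beta\sim_\alpha 0$ forces $\beta<0\cdot\aleph_\alpha=0$, so $\beta=0$, and likewise $\delta=0$). In the main case $\beta,\gamma,\delta>0$, from $\beta\sim_\alpha\gamma$ I have $\beta<\gamma\cdot\aleph_\alpha$ and from $\gamma\sim_\alpha\delta$ I have $\gamma<\delta\cdot\aleph_\alpha$; chaining these, $\beta<\gamma\cdot\aleph_\alpha<(\delta\cdot\aleph_\alpha)\cdot\aleph_\alpha=\delta\cdot(\aleph_\alpha\cdot\aleph_\alpha)$, using associativity of ordinal multiplication and monotonicity. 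Now $\aleph_\alpha\cdot\aleph_\alpha$ has cardinality $\aleph_\alpha$, but as an \emph{ordinal} it is strictly larger than $\aleph_\alpha$, so this does not immediately give $\beta<\delta\cdot\aleph_\alpha$. The key observation is that $\gamma<\delta\cdot\aleph_\alpha$ and $\delta\cdot\aleph_\alpha$ is a supremum of ordinals of the form $\delta\cdot n$ for... — more carefully, since $\aleph_\alpha$ is a limit ordinal, $\delta\cdot\aleph_\alpha=\bigcup_{\xi<\aleph_\alpha}\delta\cdot\xi$, so there is some $\xi<\aleph_\alpha$ with $\gamma<\delta\cdot\xi$ (and we may take $\xi$ infinite, or at least such that $|\xi|<\aleph_\alpha$). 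Then $\beta<\gamma\cdot\aleph_\alpha<(\delta\cdot\xi)\cdot\aleph_\alpha=\delta\cdot(\xi\cdot\aleph_\alpha)$, and here $\xi\cdot\aleph_\alpha=\aleph_\alpha$ by Lemma~\ref{lm.car} (as $0<\xi<\aleph_\alpha$), so $\beta<\delta\cdot\aleph_\alpha$ as required. The symmetric argument swapping the roles of $\beta$ and $\delta$ (starting from $\delta<\gamma\cdot\aleph_\alpha$... wait, we have $\gamma<\beta\cdot\aleph_\alpha$ and $\delta<\gamma\cdot\aleph_\alpha$) gives $\delta<\beta\cdot\aleph_\alpha$ in the same way, completing $\beta\sim_\alpha\delta$.

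I expect the main obstacle to be precisely this transitivity step, and specifically the need to route through Lemma~\ref{lm.car} rather than naively squaring: the inequality $\aleph_\alpha\cdot\aleph_\alpha>\aleph_\alpha$ in the ordinal sense is exactly the phenomenon the paper is exploiting elsewhere, so here it must be circumvented by first descending below $\aleph_\alpha$ using the limit-ordinal structure of $\aleph_\alpha$, and only then applying absorption ($\xi\cdot\aleph_\alpha=\aleph_\alpha$). A secondary point requiring care is bookkeeping the degenerate cases where one or more of $\beta,\gamma,\delta$ vanish, so that the side condition $\beta>0\lor\gamma>0$ in the definition is respected throughout and the stipulated $0\sim_\alpha 0$ is invoked where needed.
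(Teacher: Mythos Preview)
Your proposal is correct and follows essentially the same approach as the paper: reflexivity and symmetry are dismissed as immediate, the zero cases are handled separately, and transitivity in the main case is proved by using the limit-ordinal structure of $\aleph_\alpha$ to extract a witness $\xi<\aleph_\alpha$ and then invoking absorption. The paper's version differs only cosmetically---it extracts \emph{two} witnesses $\eta,\theta<\aleph_\alpha$ (one from each hypothesis) and uses $\theta\cdot\eta<\aleph_\alpha$ directly rather than routing through Lemma~\ref{lm.car}; one small nit is that your chain $\gamma\cdot\aleph_\alpha<(\delta\cdot\xi)\cdot\aleph_\alpha$ should read $\leq$ since left ordinal multiplication is only weakly monotone, but this does not affect the conclusion.
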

\begin{proof}
Both reflexivity and symmetry of relation $\sim_\alpha$ are quite straightforward to check. Cases concerning about $0$ are also rather trivial. So focusing on transitivity, take arbitrary three ordinals $\beta,\gamma,\zeta\neq0$ such that $\beta\sim_\alpha\gamma\land\gamma\sim_\alpha\zeta$. Thus $\beta<\gamma\cdot\aleph_\alpha$ and $\gamma<\zeta\cdot\aleph_\alpha$, so there exist $\eta,\theta<\aleph_\alpha$ such that $\beta<\gamma\cdot\eta$ and that $\gamma<\zeta\cdot\theta$, hence $\beta<\zeta\cdot\theta\cdot\eta<\zeta\cdot\aleph_\alpha$. Similarly $\zeta<\beta\cdot\aleph_\alpha$, therefore $\beta\sim_\alpha\zeta$.
\end{proof}
\begin{definition}[Equivalence Class ${[\beta]_\alpha}$ and Relation $<$]\label{def.equi}
Let $\alpha\in\Ord$ be fixed. For any $\beta\in\Ord$, we denote its corresponding equivalence class over $\Ord$ derived from relation $\sim_\alpha$ as $[\beta]_\alpha=\{\gamma\in\Ord:\gamma\sim_\alpha\beta\}$. We then naturally define relation $<$ over the class of equivalence classes $\Ord/\sim_\alpha$: for any $\beta,\gamma\in\Ord$, $[\beta]_\alpha<[\gamma]_\alpha$ iff $\neg(\beta\sim_\alpha\gamma)\land\beta<\gamma$.
\end{definition}
\begin{proposition}
Relation $<$ is well defined in Definition~\ref{def.equi}, and is also a well order.
\end{proposition}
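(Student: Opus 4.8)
The plan is to reduce both assertions to a single structural fact: for each fixed $\alpha\in\Ord$, every $\sim_\alpha$-equivalence class is a \emph{convex} subclass of $\Ord$ — an ``interval'' of ordinals (possibly a proper class; the bottom one being $[0]_\alpha=\{0\}$). Granting this, $\Ord/\sim_\alpha$ is just the well-ordered class $\Ord$ with a partition into consecutive intervals collapsed to points, and all the required properties are inherited essentially for free.

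So the first and main step is the convexity lemma. Take ordinals $\gamma_1<\delta<\gamma_2$ with $\gamma_1\sim_\alpha\gamma_2$; I must show $\delta\sim_\alpha\gamma_1$, which by the transitivity of $\sim_\alpha$ established in the previous proposition puts $\delta$ into the same class as $\gamma_1$ and $\gamma_2$. The cases involving $0$ are vacuous, so assume all three ordinals are positive. One half is immediate from $\aleph_\alpha\geqslant1$, namely $\gamma_1<\delta\leqslant\delta\cdot\aleph_\alpha$; for the other half, $\gamma_1\sim_\alpha\gamma_2$ gives $\gamma_2<\gamma_1\cdot\aleph_\alpha$, whence $\delta\leqslant\gamma_2<\gamma_1\cdot\aleph_\alpha$. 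Thus $\delta\sim_\alpha\gamma_1$. This uses only the definition of $\sim_\alpha$, the trivial estimate $x\leqslant x\cdot\aleph_\alpha$, and transitivity — nothing about logarithms or Lemma~\ref{lm.car}. Well-definedness of $<$ then drops out: if $[\beta]_\alpha\neq[\gamma]_\alpha$ they are disjoint convex subclasses of $\Ord$, and disjoint convex subclasses of a linear order are comparable — one lies entirely below the other, for otherwise one could find an element of one class lying strictly between two elements of the other, which convexity would force into that other class, contradicting disjointness. Hence the clause ``$\beta<\gamma\wedge\neg(\beta\sim_\alpha\gamma)$'' in Definition~\ref{def.equi} does not depend on the chosen representatives, and it computes exactly ``one class lies below the other''.

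For ``$<$ is a well order'': irreflexivity is immediate since $\neg(\beta\sim_\alpha\beta)$ fails; totality and the mutual exclusivity of the three cases follow from trichotomy on $\Ord$ together with the comparability of convex classes just noted; transitivity follows from the interval picture, or directly — if $[\beta]_\alpha<[\gamma]_\alpha<[\zeta]_\alpha$ then $\beta<\gamma<\zeta$, and $\beta\sim_\alpha\zeta$ would force $\gamma\sim_\alpha\beta$ by convexity, a contradiction. For well-foundedness, given a non-empty subclass $X\subseteq\Ord/\sim_\alpha$, let $\beta_0$ be the least ordinal whose $\sim_\alpha$-class lies in $X$ (it exists because $\Ord$ is well-ordered); then $[\beta_0]_\alpha$ is $<$-least in $X$, since every member of $X$ has some representative $\geqslant\beta_0$ and hence a class $\geqslant[\beta_0]_\alpha$.

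The convexity lemma is the only real content; everything downstream is bookkeeping that rides on the well-ordering of $\Ord$, so I do not anticipate a serious obstacle beyond it. The points still demanding mild care are the degenerate class $[0]_\alpha=\{0\}$, stating the comparability-of-convex-classes sublemma for proper classes rather than merely sets, and the silent use of monotonicity facts such as $x\leqslant x\cdot\aleph_\alpha$ and the chain $\delta\leqslant\gamma_2<\gamma_1\cdot\aleph_\alpha$.
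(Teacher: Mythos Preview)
Your proof is correct and follows the same underlying idea as the paper's: the order on $\Ord/\sim_\alpha$ is inherited from the well-order on $\Ord$. The paper's own proof is a single sentence (``Obviously, this relation $<$ over $\Ord/\sim_\alpha$ just reflects the original relation $<$ over $\Ord$''), so your convexity lemma explicitly supplies the justification that the paper leaves to the reader.
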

\begin{proof}
Obviously, this relation $<$ over $\Ord/\sim_\alpha$ just reflects the original relation $<$ over $\Ord$.
\end{proof}
\begin{lemma}\label{lm.log}
For any $\alpha,\beta\in\Ord$ such that $\alpha\geqslant1$, $\beta>1$, there uniquely exists a group of three ordinals $\gamma,\zeta,\eta\in\Ord$ such that $\alpha=\beta^\gamma\cdot\zeta+\eta$, $1\leqslant\zeta<\beta$, $\eta<\beta^\gamma$.
\end{lemma}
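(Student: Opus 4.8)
The plan is to prove existence and uniqueness separately, reading $\gamma$ as a base-$\beta$ logarithm of $\alpha$, $\zeta$ as its leading ``digit'', and $\eta$ as the remainder, so that the statement becomes a refined form of the ordinal division algorithm (and, iterated, the first step of a base-$\beta$ Cantor normal form).

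For existence I would first isolate $\gamma$. Since $\beta>1$ we have $\beta\geqslant2$, and ordinal exponentiation is monotone in the base, so $\beta^\delta\geqslant2^\delta\geqslant\delta$ for every $\delta$; hence $\beta^{\alpha+1}=\beta^\alpha\cdot\beta\geqslant2^\alpha\cdot2=2^\alpha+2^\alpha\geqslant\alpha+\alpha>\alpha$, using $\alpha\geqslant1$. Thus the class $\{\delta\in\Ord:\alpha<\beta^{\delta+1}\}$ is non-empty (it contains $\delta=\alpha$), so it has a least element, which I take to be $\gamma$. A short case split on whether $\gamma$ is $0$, a successor, or a limit --- invoking that $\delta\mapsto\beta^\delta$ is strictly increasing and continuous, so $\beta^\gamma=\sup_{\delta<\gamma}\beta^\delta$ when $\gamma$ is a limit --- shows $\beta^\gamma\leqslant\alpha<\beta^{\gamma+1}=\beta^\gamma\cdot\beta$. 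Now apply the classical ordinal division algorithm to divide $\alpha$ by $\beta^\gamma>0$: there are unique $\zeta,\eta\in\Ord$ with $\alpha=\beta^\gamma\cdot\zeta+\eta$ and $\eta<\beta^\gamma$. From $\alpha\geqslant\beta^\gamma=\beta^\gamma\cdot1$ we get $\zeta\geqslant1$, and from $\alpha<\beta^\gamma\cdot\beta$ together with left-monotonicity of multiplication by $\beta^\gamma>0$ we get $\zeta<\beta$. So $(\gamma,\zeta,\eta)$ is as required.

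For uniqueness, suppose $\alpha=\beta^\gamma\cdot\zeta+\eta=\beta^{\gamma'}\cdot\zeta'+\eta'$ with both triples satisfying the constraints. The key point is that the constraints already pin down $\gamma$ as the logarithm: since $\zeta\geqslant1$ we have $\beta^\gamma\leqslant\alpha$, and since $\eta<\beta^\gamma$ and $\zeta+1\leqslant\beta$ we have $\alpha<\beta^\gamma\cdot\zeta+\beta^\gamma=\beta^\gamma\cdot(\zeta+1)\leqslant\beta^\gamma\cdot\beta=\beta^{\gamma+1}$; likewise $\beta^{\gamma'}\leqslant\alpha<\beta^{\gamma'+1}$. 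If $\gamma<\gamma'$ then $\beta^{\gamma+1}\leqslant\beta^{\gamma'}\leqslant\alpha$, contradicting $\alpha<\beta^{\gamma+1}$; by symmetry $\gamma'<\gamma$ is also impossible, so $\gamma=\gamma'$. Then $\beta^\gamma\cdot\zeta+\eta=\beta^\gamma\cdot\zeta'+\eta'$ with $\eta,\eta'<\beta^\gamma$, and the uniqueness clause of the division algorithm gives $\zeta=\zeta'$ and $\eta=\eta'$.

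The one genuinely delicate step is the existence of $\gamma$: one must exhibit an explicit ordinal past which $\beta^{\delta+1}$ overshoots $\alpha$ (so that the least-element principle applies) and handle the limit case carefully using continuity of $\delta\mapsto\beta^\delta$. Everything else is routine bookkeeping with the standard monotonicity laws for ordinal addition, multiplication and exponentiation, plus the ordinal division algorithm, which can either be cited as classical or re-derived in a line from the least-element principle; I expect no surprises there.
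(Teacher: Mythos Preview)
Your proposal is correct and follows essentially the same route as the paper's proof: define $\gamma$ as the least ordinal with $\alpha<\beta^{\gamma+1}$, apply ordinal division by $\beta^\gamma$ to obtain $\zeta,\eta$, and check the bounds on $\zeta$; for uniqueness, squeeze $\alpha$ between $\beta^\gamma$ and $\beta^{\gamma+1}$ from the constraints and then invoke uniqueness of division. The only cosmetic differences are that you give an explicit witness ($\delta=\alpha$) for non-emptiness where the paper simply asserts that powers of $\beta$ are unbounded, and you establish $\beta^\gamma\leqslant\alpha$ directly before dividing, whereas the paper instead rules out $\zeta=0$ afterwards by the same successor/limit case split on $\gamma$.
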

\begin{proof}
For existence. Since $\beta>1$, power of $\beta$ has no upper bound. Hence let $\gamma$ be the minimal ordinal such that $\alpha<\beta^{\gamma+1}$, and as $\beta>1$, $\beta^\gamma\geqslant1$. Then by ordinal division, there uniquely exists a group of two ordinals $\zeta,\eta$ such that $\alpha=\beta^\gamma\cdot\zeta+\eta$, $\eta<\beta^\gamma$. Suppose $\zeta\geqslant\beta$, then $\alpha=\beta^\gamma\cdot\zeta+\eta\geqslant\beta^\gamma\cdot\zeta\geqslant\beta^\gamma\cdot\beta=\beta^{\gamma+1}$, a contradiction, so $\zeta<\beta$. Suppose $\zeta=0$, then $\alpha=\beta^\gamma\cdot\zeta+\eta=\eta<\beta^\gamma$, and as $\alpha\geqslant1$, $\gamma>0$. If $\gamma$ is a successor ordinal, namely there exists an ordinal $\theta$ such that $\gamma=\theta+1$, then $\alpha<\beta^{\theta+1}$ but $\theta<\gamma$, a contradiction. If $\gamma$ is a limit ordinal, then $\alpha\in\beta^\gamma=\bigcup\limits_{\rho<\gamma}\beta^\rho$, so there exists an ordinal $\theta<\gamma$ such that $\alpha\in\beta^\theta$, then $\alpha<\beta^\theta<\beta^{\theta+1}$, a contradiction. So $\zeta\neq0$, therefore $1\leqslant\zeta<\beta$.

For uniqueness. Suppose there exist two groups of three ordinals $\gamma_1,\zeta_1,\eta_1$ and $\gamma_2,\zeta_2,\eta_2$, such that $\alpha=\beta^{\gamma_1}\cdot\zeta_1+\eta_1=\beta^{\gamma_2}\cdot\zeta_2+\eta_2$, $1\leqslant\zeta_1,\zeta_2<\beta$, $\eta_1<\beta^{\gamma_1}$, $\eta_2<\beta^{\gamma_2}$. Hence $\beta^{\gamma_1}\leqslant\alpha=\beta^{\gamma_1}\cdot\zeta_1+\eta_1<\beta^{\gamma_1}\cdot\zeta_1+\beta^{\gamma_1}=\beta^{\gamma_1}\cdot(\zeta_1+1)\leqslant\beta^{\gamma_1}\cdot\beta=\beta^{\gamma_1+1}$, and similarly $\beta^{\gamma_2}\leqslant\alpha<\beta^{\gamma_2+1}$, so it is easy to see $\gamma_1=\gamma_2$. Then by ordinal division, $\zeta_1=\zeta_2$ and $\eta_1=\eta_2$.
\end{proof}
\begin{definition}[Ordinal Logarithm]
For any $\alpha,\beta\in\Ord$ such that $\alpha\geqslant1$, $\beta>1$, referring to Lemma~\ref{lm.log}, there unique exists $\gamma\in\Ord$, which we denote as $\gamma=\log_{\beta}\alpha$.
\end{definition}
\begin{theorem}\label{th.log}
Let $\alpha\in\Ord$ be fixed. For any $\beta,\gamma\in\Ord$ such that $\beta,\gamma\geqslant1$, $[\beta]_\alpha<[\gamma]_\alpha\iff\log_{\aleph_\alpha}\beta<\log_{\aleph_\alpha}\gamma$.
\end{theorem}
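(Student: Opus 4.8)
```latex
The plan is to prove the theorem by relating the equivalence classes $[\beta]_\alpha$ directly to the ordinal logarithm $\log_{\aleph_\alpha}$, essentially showing that two ordinals are $\sim_\alpha$-equivalent if and only if they share the same base-$\aleph_\alpha$ logarithm. Concretely, I would first establish the key auxiliary fact: for any $\beta\geqslant1$, writing $\gamma=\log_{\aleph_\alpha}\beta$ via Lemma~\ref{lm.log} as $\beta=\aleph_\alpha^\gamma\cdot\zeta+\eta$ with $1\leqslant\zeta<\aleph_\alpha$ and $\eta<\aleph_\alpha^\gamma$, we have $\beta\sim_\alpha\aleph_\alpha^\gamma$. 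This is the crux, and I expect it to be the main obstacle, since it requires squeezing $\beta$ between $\aleph_\alpha^\gamma$ and $\aleph_\alpha^\gamma\cdot\aleph_\alpha=\aleph_\alpha^{\gamma+1}$: the inequality $\aleph_\alpha^\gamma\leqslant\beta<\aleph_\alpha^{\gamma+1}$ already appeared inside the proof of Lemma~\ref{lm.log}, so from $\beta<\aleph_\alpha^\gamma\cdot\aleph_\alpha$ one half of $\beta\sim_\alpha\aleph_\alpha^\gamma$ is immediate; for the other half one needs $\aleph_\alpha^\gamma<\beta\cdot\aleph_\alpha$, which follows since $\aleph_\alpha^\gamma\leqslant\beta\leqslant\beta\cdot1<\beta\cdot\aleph_\alpha$ (using $\aleph_\alpha\geqslant 2$).

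Next I would prove the monotonicity half, i.e. $[\beta]_\alpha<[\gamma]_\alpha\Rightarrow\log_{\aleph_\alpha}\beta<\log_{\aleph_\alpha}\gamma$. Write $\delta=\log_{\aleph_\alpha}\beta$ and $\varepsilon=\log_{\aleph_\alpha}\gamma$. By the auxiliary fact, $\beta\sim_\alpha\aleph_\alpha^\delta$ and $\gamma\sim_\alpha\aleph_\alpha^\varepsilon$, so $[\beta]_\alpha=[\aleph_\alpha^\delta]_\alpha$ and $[\gamma]_\alpha=[\aleph_\alpha^\varepsilon]_\alpha$. Hence it suffices to show $[\aleph_\alpha^\delta]_\alpha<[\aleph_\alpha^\varepsilon]_\alpha$ forces $\delta<\varepsilon$. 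If $\delta\geqslant\varepsilon$ then $\aleph_\alpha^\delta\geqslant\aleph_\alpha^\varepsilon$; and in fact if $\delta>\varepsilon$ we would get $\aleph_\alpha^\delta\geqslant\aleph_\alpha^{\varepsilon+1}=\aleph_\alpha^\varepsilon\cdot\aleph_\alpha$, which contradicts $\aleph_\alpha^\delta<\aleph_\alpha^\varepsilon\cdot\aleph_\alpha$ (a consequence of $\aleph_\alpha^\delta\sim_\alpha\aleph_\alpha^\varepsilon$ being false together with the order hypothesis — more carefully, $[\aleph_\alpha^\delta]_\alpha<[\aleph_\alpha^\varepsilon]_\alpha$ means $\aleph_\alpha^\delta<\aleph_\alpha^\varepsilon$ and $\neg(\aleph_\alpha^\delta\sim_\alpha\aleph_\alpha^\varepsilon)$, and $\aleph_\alpha^\delta<\aleph_\alpha^\varepsilon$ already gives $\delta<\varepsilon$ by strict monotonicity of ordinal exponentiation in the exponent for base $\geqslant 2$). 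So this direction is actually short once the base cases are set up.

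For the converse, $\log_{\aleph_\alpha}\beta<\log_{\aleph_\alpha}\gamma\Rightarrow[\beta]_\alpha<[\gamma]_\alpha$: again set $\delta=\log_{\aleph_\alpha}\beta<\varepsilon=\log_{\aleph_\alpha}\gamma$. By the auxiliary fact $[\beta]_\alpha=[\aleph_\alpha^\delta]_\alpha$ and $[\gamma]_\alpha=[\aleph_\alpha^\varepsilon]_\alpha$, so I must check $[\aleph_\alpha^\delta]_\alpha<[\aleph_\alpha^\varepsilon]_\alpha$, i.e. $\aleph_\alpha^\delta<\aleph_\alpha^\varepsilon$ and $\neg(\aleph_\alpha^\delta\sim_\alpha\aleph_\alpha^\varepsilon)$. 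The first is monotonicity of exponentiation. For the second, since $\delta<\varepsilon$ we have $\delta+1\leqslant\varepsilon$, hence $\aleph_\alpha^\delta\cdot\aleph_\alpha=\aleph_\alpha^{\delta+1}\leqslant\aleph_\alpha^\varepsilon$, so it is not the case that $\aleph_\alpha^\varepsilon<\aleph_\alpha^\delta\cdot\aleph_\alpha$, which is precisely one of the two conjuncts required for $\aleph_\alpha^\delta\sim_\alpha\aleph_\alpha^\varepsilon$; therefore $\neg(\aleph_\alpha^\delta\sim_\alpha\aleph_\alpha^\varepsilon)$. Finally I would remark that the $\sim_\alpha$-class comparison is transported back to $\beta,\gamma$ using that $\sim_\alpha$ is an equivalence relation and that $<$ on $\Ord/\sim_\alpha$ is well defined (both established earlier), completing the equivalence. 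The only genuinely delicate point throughout is the auxiliary claim $\beta\sim_\alpha\aleph_\alpha^{\log_{\aleph_\alpha}\beta}$, and I would present it as a standalone sub-step before assembling the two implications.
```
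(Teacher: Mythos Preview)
Your argument is correct, but it is organized differently from the paper's. The paper does not isolate the auxiliary fact $\beta\sim_\alpha\aleph_\alpha^{\log_{\aleph_\alpha}\beta}$ at all; instead it first rewrites the order on equivalence classes as a single inequality, namely $[\beta]_\alpha<[\gamma]_\alpha\iff\beta\cdot\aleph_\alpha\leqslant\gamma$, and then works directly with the base-$\aleph_\alpha$ decompositions $\beta=\aleph_\alpha^\zeta\cdot\eta+\theta$ and $\gamma=\aleph_\alpha^\rho\cdot\sigma+\tau$ from Lemma~\ref{lm.log}, using Lemma~\ref{lm.car} to collapse $(\aleph_\alpha^\zeta\cdot\eta+\theta)\cdot\aleph_\alpha$ to $\aleph_\alpha^{\zeta+1}$. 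This yields the whole theorem as one short chain of biconditionals. Your route via canonical representatives $\aleph_\alpha^\delta$ is conceptually clean---it makes explicit that each $\sim_\alpha$-class is labelled by a unique power of $\aleph_\alpha$---and avoids appealing to Lemma~\ref{lm.car}, at the cost of splitting the argument into an auxiliary lemma plus two separate implications. The paper's version is terser and more computational; yours is more structural and arguably more informative about what the equivalence classes look like.
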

\begin{proof}
By Lemma~\ref{lm.log}, there uniquely exist two groups of three ordinals $\zeta,\eta,\theta$ and $\rho,\sigma,\tau$, such that $\beta=\aleph_\alpha^\zeta\cdot\eta+\theta$, $\gamma=\aleph_\alpha^\rho\cdot\sigma+\tau$, $1\leqslant\eta,\sigma<\aleph_\alpha$, $\theta<\aleph_\alpha^\zeta$, $\tau<\aleph_\alpha^\rho$. Therefore by also making use of Lemma~\ref{lm.car}:
\begin{align*}
    [\beta]_\alpha<[\gamma]_\alpha\iff & \beta\cdot\aleph_\alpha\leqslant\gamma\\
    \iff & (\aleph_\alpha^\zeta\cdot\eta+\theta)\cdot\aleph_\alpha\leqslant\aleph_\alpha^\rho\cdot\sigma+\tau\\
    \iff & \aleph_\alpha^{\zeta+1}\leqslant\aleph_\alpha^\rho\cdot\sigma+\tau\\
    \iff & \zeta+1\leqslant\rho\\
    \iff & \zeta<\rho\\
    \iff & \log_{\aleph_\alpha}\beta<\log_{\aleph_\alpha}\gamma
\end{align*}
\end{proof}

\subsection{Transfinite Ordinal Addition}

\begin{definition}[Transfinite Ordinal Addition]
Let $\alpha\in\Ord$ be fixed, and $f_\alpha:\alpha\to\Ord$ be a fixed function. For any $\beta\in\Ord$ such that $\beta\leqslant\alpha$, we define $\sum\limits_\beta f_\alpha$ by transfinite induction as the following:
\begin{itemize}
    \item If $\beta=0$, then let $\sum\limits_0 f_\alpha=0$.
    \item If $\beta$ is a successor ordinal, namely there exists an ordinal $\gamma$ such that $\beta=\gamma+1$, since $\beta\leqslant\alpha$, we have $\gamma<\alpha$, namely $\gamma\in\alpha$, so let $\sum\limits_\beta f_\alpha=(\sum\limits_\gamma f_\alpha)+f_\alpha(\gamma)$.
    \item If $\beta$ is a limit ordinal, then let $\sum\limits_\beta f_\alpha=\bigcup\limits_{\gamma<\beta}(\sum\limits_\gamma f_\alpha)$.
\end{itemize}
Finally, we denote $\sum\limits_\alpha f_\alpha$ simply as $\sum f_\alpha$.
\end{definition}
\begin{definition}[Relation $<$]\label{def.add}
Let $\alpha\in\Ord$ be fixed, and $f_\alpha:\alpha\to\Ord$ be a fixed function. Define relation $<$ over the set $\{(\beta,\gamma):\beta\in\alpha,\gamma\in f_\alpha(\beta)\}$ as the following: for any $(\beta_1,\gamma_1),(\beta_2,\gamma_2)\in\{(\beta,\gamma):\beta\in\alpha,\gamma\in f_\alpha(\beta)\}$, if $\beta_1\neq\beta_2$, then $(\beta_1,\gamma_1)<(\beta_2,\gamma_2)$ iff $\beta_1<\beta_2$; otherwise, $(\beta_1,\gamma_1)<(\beta_2,\gamma_2)$ iff $\gamma_1<\gamma_2$.
\end{definition}
\begin{lemma}\label{lm.add}
Relation $<$ defined in Definition~\ref{def.add} is a well order isomorphic to $\sum f_\alpha$.
\end{lemma}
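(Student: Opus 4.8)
The plan is to exhibit an explicit order isomorphism from the ordered set of Definition~\ref{def.add} onto the ordinal $\sum f_\alpha$; since any linear order that is order-isomorphic to an ordinal is automatically a well order, a single construction then yields both assertions of the lemma at once. After the routine check that the relation $<$ of Definition~\ref{def.add} is a linear order (coordinatewise trichotomy of $<$ on $\Ord$), the candidate map is
$$g:\{(\beta,\gamma):\beta\in\alpha,\gamma\in f_\alpha(\beta)\}\to\Ord,\qquad g(\beta,\gamma)=\Bigl(\sum\nolimits_\beta f_\alpha\Bigr)+\gamma.$$
Before anything else I would record the routine fact that $\beta\mapsto\sum\nolimits_\beta f_\alpha$ is nondecreasing on $\beta\leqslant\alpha$ and continuous at limit stages (the latter is immediate from the definition $\sum\nolimits_\beta f_\alpha=\bigcup_{\gamma<\beta}\sum\nolimits_\gamma f_\alpha=\sup_{\gamma<\beta}\sum\nolimits_\gamma f_\alpha$); monotonicity follows by an easy transfinite induction using $\sum\nolimits_{\beta+1}f_\alpha=(\sum\nolimits_\beta f_\alpha)+f_\alpha(\beta)$.

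First I would check that $g$ indeed lands in $\sum f_\alpha$ and is strictly order-preserving. For the former, if $\beta<\alpha$ and $\gamma<f_\alpha(\beta)$, then $(\sum\nolimits_\beta f_\alpha)+\gamma<(\sum\nolimits_\beta f_\alpha)+f_\alpha(\beta)=\sum\nolimits_{\beta+1}f_\alpha\leqslant\sum\nolimits_\alpha f_\alpha=\sum f_\alpha$, the last inequality by monotonicity. For order preservation, given $(\beta_1,\gamma_1)<(\beta_2,\gamma_2)$ I would split on the defining case distinction: if $\beta_1=\beta_2$ the claim reduces to $\gamma_1<\gamma_2$ and left-cancellation of ordinal addition; if $\beta_1<\beta_2$ then $g(\beta_1,\gamma_1)<\sum\nolimits_{\beta_1+1}f_\alpha\leqslant\sum\nolimits_{\beta_2}f_\alpha\leqslant g(\beta_2,\gamma_2)$. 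Strict monotonicity together with linearity of the domain gives injectivity, and (once surjectivity is in hand) also the reverse implication, so that $g$ is a genuine order isomorphism onto its image.

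The substantive step is surjectivity of $g$ onto $\sum f_\alpha$. Given $\delta<\sum f_\alpha$, I would let $\beta$ be the least ordinal with $\delta<\sum\nolimits_\beta f_\alpha$; such a $\beta\leqslant\alpha$ exists since $\delta<\sum\nolimits_\alpha f_\alpha$. Now $\beta\neq0$ because $\sum\nolimits_0 f_\alpha=0$, and $\beta$ is not a limit ordinal, for otherwise continuity at $\beta$ would force $\delta<\sum\nolimits_\gamma f_\alpha$ for some $\gamma<\beta$, contradicting minimality. Hence $\beta=\beta'+1$ with $\beta'<\alpha$; minimality gives $\sum\nolimits_{\beta'}f_\alpha\leqslant\delta<\sum\nolimits_{\beta'}f_\alpha+f_\alpha(\beta')$, so by ordinal subtraction there is a unique $\gamma<f_\alpha(\beta')$ with $\delta=(\sum\nolimits_{\beta'}f_\alpha)+\gamma$, i.e. $g(\beta',\gamma)=\delta$. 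This \emph{locating of the correct first coordinate} $\beta'$ is where I expect the only real friction: it hinges on the partial-sum function $\beta\mapsto\sum\nolimits_\beta f_\alpha$ behaving like a (not necessarily strictly increasing) normal function, and one must be a little careful because $f_\alpha$ may take the value $0$, so consecutive partial sums can coincide.

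As an alternative route one could argue by transfinite induction on $\beta\leqslant\alpha$ that the suborder on $\{(\beta',\gamma):\beta'\in\beta,\gamma\in f_\alpha(\beta')\}$ is a well order isomorphic to $\sum\nolimits_\beta f_\alpha$ and is an initial segment of the full order: the successor step glues an isomorphic copy of $f_\alpha(\beta')$ on top, and the limit step uses that an increasing union of well orders, each an initial segment of the next, has order type the supremum of the pieces. I would nevertheless present the explicit-isomorphism argument as the main proof, since it is shorter and makes the order type transparent, with well-foundedness and linearity falling out as immediate corollaries.
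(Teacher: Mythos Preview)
Your proof is correct, but it proceeds by a genuinely different route from the paper. The paper first establishes well-foundedness directly, by the standard lexicographic argument (minimise the first coordinate over a nonempty subset, then minimise the second coordinate within that fibre), and only afterwards computes the order type by transfinite induction on $\alpha$: the base case is empty, the successor case glues a copy of $f_\alpha(\zeta)$ on top of the inductively known order type $\sum_\zeta f_\alpha$, and the limit case takes the union. This is precisely what you sketch as your ``alternative route''. Your main argument instead exhibits the explicit map $g(\beta,\gamma)=(\sum_\beta f_\alpha)+\gamma$ and verifies it is an order isomorphism onto $\sum f_\alpha$, deriving well-foundedness as a corollary. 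Your approach is arguably cleaner and yields the concrete isomorphism for free, at the cost of first recording the monotonicity and limit-continuity of the partial-sum function; the paper's approach is slightly more elementary in that it separates the well-foundedness check from the order-type computation and needs no preliminary lemma about partial sums. Both are standard and equally valid.
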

\begin{proof}
It is not difficult to see that relation $<$ over the set $\{(\beta,\gamma):\beta\in\alpha,\gamma\in f_\alpha(\beta)\}$ is a total order. To show that it is also well founded, for any non-empty subset $A\subseteq\{(\beta,\gamma):\beta\in\alpha,\gamma\in f_\alpha(\beta)\}$ such that $A\neq\emptyset$, let $A_\beta=\{\beta\in\alpha:\exists\gamma\in f_\alpha(\beta),(\beta,\gamma)\in A\}$. Obviously $A_\beta\subseteq\alpha$ and $A_\beta\neq\emptyset$, since ordinals are well ordered, let $\beta_0\in A_\beta$ be the minimal element in $A_\beta$, and let $A_\gamma=\{\gamma\in f_\alpha(\beta_0):(\beta_0,\gamma)\in A\}$. Obviously $A_\gamma\subseteq f_\alpha(\beta_0)$ and $A_\gamma\neq\emptyset$, since ordinals are well ordered, let $\gamma_0\in A_\gamma$ be the minimal element in $A_\gamma$. Then $(\beta_0,\gamma_0)\in A$, and it is easy to reason that $(\beta_0,\gamma_0)$ is the minimal element in $A$. Hence relation $<$ over the set $\{(\beta,\gamma):\beta\in\alpha,\gamma\in f_\alpha(\beta)\}$ is well founded, so it is a well order.

In the following we show that the ordinal of well order relation $<$ over the set $\{(\beta,\gamma):\beta\in\alpha,\gamma\in f_\alpha(\beta)\}$ is $\sum f_\alpha$, by transfinite induction on $\alpha$:
\begin{itemize}
    \item If $\alpha=0$, namely $\alpha=\emptyset$ so obviously the set $\{(\beta,\gamma):\beta\in\alpha,\gamma\in f_\alpha(\beta)\}=\emptyset$ and $\sum f_\alpha=0$.
    \item If $\alpha$ is a successor ordinal, namely there exists an ordinal $\zeta$ such that $\alpha=\zeta+1$, let $g_\zeta:\zeta\to\Ord=\{(\beta,f_\alpha(\beta)):\beta\in\zeta\}\subset f_\alpha$ be another function, and by induction hypothesis we know that the ordinal of well order relation $<$ over the set $\{(\beta,\gamma):\beta\in\zeta,\gamma\in g_\zeta(\beta)\}$ is $\sum g_\zeta$. As the intuitive picture of addition between two ordinals is just to put one well order after the other one so as to together form a new well order, we know that the ordinal of well order relation $<$ over the set $\{(\beta,\gamma):\beta\in\alpha,\gamma\in f_\alpha(\beta)\}$ is $(\sum g_\zeta)+f_\alpha(\zeta)=(\sum\limits_\zeta f_\alpha)+f_\alpha(\zeta)=\sum\limits_\alpha f_\alpha=\sum f_\alpha$.
    \item If $\alpha$ is a limit ordinal, for any ordinal $\zeta<\alpha$, let $g_\zeta:\zeta\to\Ord=\{(\beta,f_\alpha(\beta)):\beta\in\zeta\}\subset f_\alpha$ be another function, and by induction hypothesis we know that the ordinal of well order relation $<$ over the set $\{(\beta,\gamma):\beta\in\zeta,\gamma\in g_\zeta(\beta)\}$ is $\sum g_\zeta$. As $\alpha$ is a limit ordinal, we have the set $\{(\beta,\gamma):\beta\in\alpha,\gamma\in f_\alpha(\beta)\}=\bigcup\limits_{\zeta<\alpha}\{(\beta,\gamma):\beta\in\zeta,\gamma\in f_\alpha(\beta)\}=\bigcup\limits_{\zeta<\alpha}\{(\beta,\gamma):\beta\in\zeta,\gamma\in g_\zeta(\beta)\}$, so the ordinal of well order relation $<$ over this set is $\bigcup\limits_{\zeta<\alpha}(\sum g_\zeta)=\bigcup\limits_{\zeta<\alpha}(\sum\limits_\zeta f_\alpha)=\sum\limits_\alpha f_\alpha=\sum f_\alpha$.
\end{itemize}
\end{proof}
\begin{lemma}\label{lm.reg}
Let $\aleph_\alpha$ be a fixed infinite regular cardinal. Let $\beta$ be any fixed ordinal such that $\beta\leqslant\aleph_\alpha$, and $f_\beta:\beta\to\aleph_\alpha$ be a fixed function. Then $\sum f_\beta\leqslant\aleph_\alpha$.
\end{lemma}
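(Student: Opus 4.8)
The plan is to establish, by transfinite induction on $\beta$, the following slightly strengthened claim: for every ordinal $\beta\leqslant\aleph_\alpha$ and every function $g:\beta\to\aleph_\alpha$, we have $\sum g<\aleph_\alpha$ whenever $\beta<\aleph_\alpha$, and $\sum g\leqslant\aleph_\alpha$ when $\beta=\aleph_\alpha$. Strengthening the conclusion to a \emph{strict} inequality in the non-maximal case is essential, since the literal bound $\sum g\leqslant\aleph_\alpha$ cannot be carried through a successor step: $\aleph_\alpha+1>\aleph_\alpha$. One might also hope to shortcut everything via Lemma~\ref{lm.add}, realizing $\sum f_\beta$ as a lexicographic well order on a set of cardinality at most $\mid\beta\mid\cdot\sup_{\gamma<\beta}\mid f_\beta(\gamma)\mid\leqslant\aleph_\alpha$; but this only yields $\mid\sum f_\beta\mid\leqslant\aleph_\alpha$, hence $\sum f_\beta<\aleph_{\alpha+1}$, which is far weaker than the asserted $\sum f_\beta\leqslant\aleph_\alpha$. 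So the inductive route seems unavoidable.

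The base case $\beta=0$ is immediate, $\sum g=0<\aleph_\alpha$. For the successor case, let $\beta=\gamma+1\leqslant\aleph_\alpha$; since $\aleph_\alpha$ is an infinite cardinal and hence a limit ordinal, automatically $\beta<\aleph_\alpha$ and a fortiori $\gamma<\aleph_\alpha$. By the recursive definition $\sum g=(\sum\limits_\gamma g)+g(\gamma)$, where $\sum\limits_\gamma g$ coincides with the transfinite sum of $g$ restricted to $\gamma$; the induction hypothesis gives $\sum\limits_\gamma g<\aleph_\alpha$, and $g(\gamma)<\aleph_\alpha$ by choice of codomain. It then suffices to use the elementary fact that the ordinal sum of two ordinals each below an infinite cardinal $\kappa$ stays below $\kappa$ — indeed $\mid\sigma+\tau\mid=\mid\sigma\mid+\mid\tau\mid<\kappa$ (cardinal arithmetic on the right, as $\kappa$ is an infinite cardinal with $\mid\sigma\mid,\mid\tau\mid<\kappa$), and an ordinal of cardinality $<\kappa$ is itself $<\kappa$ — to conclude $\sum g<\aleph_\alpha$. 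Note that only cardinalhood, not regularity, is used here.

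The limit case is the heart of the matter. Here $\sum g=\bigcup\limits_{\gamma<\beta}(\sum\limits_\gamma g)$, the supremum of the partial sums. If $\beta<\aleph_\alpha$, the induction hypothesis puts every term $\sum\limits_\gamma g$ (for $\gamma<\beta$) strictly below $\aleph_\alpha$, while the number of terms is $\beta<\aleph_\alpha=\mathrm{cf}(\aleph_\alpha)$ precisely because $\aleph_\alpha$ is regular; a supremum of fewer than $\mathrm{cf}(\aleph_\alpha)$ ordinals, each below $\aleph_\alpha$, must remain below $\aleph_\alpha$, so $\sum g<\aleph_\alpha$. If instead $\beta=\aleph_\alpha$, the induction hypothesis still gives $\sum\limits_\gamma g<\aleph_\alpha$ for each $\gamma<\aleph_\alpha$, so $\bigcup\limits_{\gamma<\aleph_\alpha}(\sum\limits_\gamma g)\subseteq\aleph_\alpha$ and thus $\sum g\leqslant\aleph_\alpha$. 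Specializing to $g=f_\beta$ with the given $\beta$ yields the lemma. I expect the only real obstacle to be this limit step with $\beta<\aleph_\alpha$: it is the single point where regularity of $\aleph_\alpha$ genuinely enters, and without it the partial sums could climb cofinally to $\aleph_\alpha$ and break the strict bound on which the successor step relies; everything else is routine ordinal arithmetic, once one has the foresight to prove the strict form of the inequality below $\aleph_\alpha$ rather than the bare statement of the lemma.
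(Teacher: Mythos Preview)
Your proof is correct and essentially the same as the paper's: both hinge on the two facts that a sum of two ordinals below $\aleph_\alpha$ stays below $\aleph_\alpha$, and that regularity prevents a supremum of fewer than $\aleph_\alpha$ ordinals below $\aleph_\alpha$ from reaching $\aleph_\alpha$. The only cosmetic difference is that you phrase it as a direct transfinite induction with an explicitly strengthened hypothesis ($\sum g<\aleph_\alpha$ for $\beta<\aleph_\alpha$), whereas the paper argues by contradiction via the minimal $\gamma_0$ at which the partial sum first reaches $\aleph_\alpha$; these are dual formulations of the same argument.
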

\begin{proof}
It suffices to prove this statement only when $\beta$ is a limit ordinal. To a contradiction suppose $\sum f_\beta=\sum\limits_\beta f_\beta=\bigcup\limits_{\gamma<\beta}(\sum\limits_\gamma f_\beta)>\aleph_\alpha$, since ordinals are well ordered, there exists the minimal ordinal $\gamma_0<\beta\leqslant\aleph_\alpha$ such that $\sum\limits_{\gamma_0}f_\beta\geqslant\aleph_\alpha$. Obviously $\gamma_0\neq0$. If $\gamma_0$ is a successor ordinal, namely there exists an ordinal $\zeta$ such that $\gamma_0=\zeta+1$, then $\sum\limits_\zeta f_\beta<\aleph_\alpha$ and $f_\beta(\zeta)<\aleph_\alpha$ but $\sum\limits_{\gamma_0}f_\beta=(\sum\limits_\zeta f_\beta)+f_\beta(\zeta)\geqslant\aleph_\alpha$, a contradiction. If $\gamma_0$ is a limit ordinal, then $\gamma_0<\aleph_\alpha$, and $\forall\zeta<\gamma_0$, $\sum\limits_\zeta f_\beta<\aleph_\alpha$, but $\sum\limits_{\gamma_0}f_\beta=\bigcup\limits_{\zeta<\gamma_0}(\sum\limits_\zeta f_\beta)\geqslant\aleph_\alpha$, contradicting that $\aleph_\alpha$ is a regular cardinal. Therefore $\sum f_\beta\leqslant\aleph_\alpha$.
\end{proof}
\begin{theorem}\label{th.squ}
Let $\aleph_\alpha$ be a fixed infinite regular cardinal. Let $\beta$ be any fixed ordinal such that $\beta\leqslant\aleph_\alpha$, and $f_\beta:\beta\to(\aleph_\alpha+1)$ be a fixed function. Then $\sum f_\beta=\aleph_\alpha^2$ iff $\mid\{\gamma\in\beta:f_\beta(\gamma)=\aleph_\alpha\}\mid=\aleph_\alpha$.
\end{theorem}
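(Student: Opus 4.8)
The plan is to push everything through Lemma~\ref{lm.add}, which identifies the transfinite sum with an explicit order type. Write $A=\{\gamma\in\beta:f_\beta(\gamma)=\aleph_\alpha\}$, and let $D$ be the set $\{(\gamma,\delta):\gamma\in\beta,\ \delta\in f_\beta(\gamma)\}$ equipped with the well order of Definition~\ref{def.add}, so that by Lemma~\ref{lm.add} the ordinal of $D$ equals $\sum f_\beta$. Throughout I will use two elementary facts about well orders: the ordinal of a sub-well order never exceeds that of the whole, and if a well order is cut into an initial segment followed by the complementary final segment then the two ordinals add up. I will also record once, for use in both directions, the \emph{unconditional} bound $\sum f_\beta\leqslant\aleph_\alpha^2$: since $f_\beta(\gamma)\leqslant\aleph_\alpha$ for every $\gamma$, the set $D$ is a sub-well order of $\{(\gamma,\delta):\gamma\in\beta,\ \delta\in\aleph_\alpha\}$, whose ordinal is $\aleph_\alpha\cdot\beta$ (this is the transfinite sum of the constant function with value $\aleph_\alpha$ over $\beta$, which the defining recursion makes equal to $\aleph_\alpha\cdot\beta$), and $\aleph_\alpha\cdot\beta\leqslant\aleph_\alpha\cdot\aleph_\alpha=\aleph_\alpha^2$ because $\beta\leqslant\aleph_\alpha$.

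For the direction from right to left, assume $|A|=\aleph_\alpha$. Since $A\subseteq\beta\leqslant\aleph_\alpha$, the ordinal of $A$ (as a sub-well order of $\beta$) is at most $\aleph_\alpha$, while having cardinality $\aleph_\alpha$ forces it to be at least $\aleph_\alpha$; hence the ordinal of $A$ is exactly $\aleph_\alpha$. Consider the sub-well order $D_A=\{(\gamma,\delta)\in D:\gamma\in A\}$ of $D$; for $\gamma\in A$ the second coordinate $\delta$ ranges over all of $\aleph_\alpha$, so reindexing $A$ by its ordinal $\aleph_\alpha$ and applying Lemma~\ref{lm.add} to the constant function with value $\aleph_\alpha$ shows that the ordinal of $D_A$ is $\aleph_\alpha\cdot\aleph_\alpha=\aleph_\alpha^2$. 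Therefore $\sum f_\beta\geqslant\aleph_\alpha^2$, and combined with the unconditional upper bound we get $\sum f_\beta=\aleph_\alpha^2$.

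For the direction from left to right I argue by contraposition: assuming $|A|<\aleph_\alpha$, I show $\sum f_\beta<\aleph_\alpha^2$. If $\beta<\aleph_\alpha$ this is immediate, since then $\sum f_\beta\leqslant\aleph_\alpha\cdot\beta<\aleph_\alpha\cdot\aleph_\alpha=\aleph_\alpha^2$. So suppose $\beta=\aleph_\alpha$. Here regularity enters: a subset of $\aleph_\alpha$ of cardinality $<\aleph_\alpha$ cannot be cofinal in $\aleph_\alpha$, so $\sup A<\aleph_\alpha$, and I fix an ordinal $\mu$ with $0<\mu<\aleph_\alpha$ and $A\subseteq\mu$. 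By Lemma~\ref{lm.car}, $\mu+\aleph_\alpha=\aleph_\alpha$, so the tail $[\mu,\aleph_\alpha)$ of the index set has order type $\aleph_\alpha$. Correspondingly $D$ is cut, in the order of Definition~\ref{def.add}, into the initial segment of pairs whose first coordinate is $<\mu$ and the final segment of pairs whose first coordinate is $\geqslant\mu$. The initial segment is a sub-well order of $\mu\times\aleph_\alpha$, hence has ordinal at most $\aleph_\alpha\cdot\mu$. On the tail every value $f_\beta(\gamma)$ is $<\aleph_\alpha$, because $A\subseteq\mu$; so after reindexing the tail by $\aleph_\alpha$, Lemma~\ref{lm.reg} bounds the ordinal of the final segment by $\aleph_\alpha$. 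Adding, $\sum f_\beta\leqslant\aleph_\alpha\cdot\mu+\aleph_\alpha=\aleph_\alpha\cdot(\mu+1)<\aleph_\alpha\cdot\aleph_\alpha=\aleph_\alpha^2$, the last inequality because $\mu+1<\aleph_\alpha$ and ordinal multiplication is strictly monotone in its right argument.

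I expect the left-to-right direction to be the main obstacle, and within it the real issue is that along $\beta$ the ``tall'' columns (those of height $\aleph_\alpha$, indexed by $A$) and the ``short'' columns (height $<\aleph_\alpha$) are interleaved, so one cannot simply add up their separate contributions. The device that dissolves the difficulty is to invoke regularity of $\aleph_\alpha$ to confine all the tall columns to a bounded initial block, thereby exposing a genuine tail of order type $\aleph_\alpha$ made up exclusively of short columns, on which Lemma~\ref{lm.reg} applies verbatim; the rest is routine bookkeeping with Lemma~\ref{lm.add}, Lemma~\ref{lm.car}, and monotonicity of ordinal arithmetic.
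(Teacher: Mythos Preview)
Your proof is correct and follows essentially the same route as the paper's: both establish the unconditional upper bound $\sum f_\beta\leqslant\aleph_\alpha^2$ by comparison with the constant function, prove the right-to-left direction via the sub-well order over the ``tall'' columns, and prove the left-to-right direction by contraposition, using regularity to bound the set $A$ below some $\mu<\aleph_\alpha$ and then applying Lemma~\ref{lm.reg} to the tail. Your treatment is slightly more explicit in places---you separate the trivial case $\beta<\aleph_\alpha$, invoke Lemma~\ref{lm.car} to identify the order type of $[\mu,\aleph_\alpha)$, and spell out the reindexing needed before applying Lemma~\ref{lm.reg}---but the underlying argument is the same.
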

\begin{proof}
Let $g_\beta:\beta\to(\aleph_\alpha+1)$ be another function such that $\forall\gamma\in\beta$, $g_\beta(\gamma)=\aleph_\alpha\geqslant f_\beta(\gamma)$. Then obviously $\sum f_\beta\leqslant\sum g_\beta=\aleph_\alpha\cdot\beta\leqslant\aleph_\alpha^2$.

For the direction from right to left. As $\{\gamma\in\beta:f_\beta(\gamma)=\aleph_\alpha\}\subseteq\beta$, we have $\mid\beta\mid\geqslant\mid\{\gamma\in\beta:f_\beta(\gamma)=\aleph_\alpha\}\mid=\aleph_\alpha$, and because $\beta\leqslant\aleph_\alpha$, $\beta=\aleph_\alpha$. Relation $<$ over the subset $\{\gamma\in\beta:f_\beta(\gamma)=\aleph_\alpha\}\subseteq\beta$ is also a well order with ordinal less than or equal to $\beta$, namely $\aleph_\alpha$, and because $\mid\{\gamma\in\beta:f_\beta(\gamma)=\aleph_\alpha\}\mid=\aleph_\alpha$, the ordinal of well order relation $<$ over this set is $\aleph_\alpha$, too. Thus consider the subset $\{(\gamma,\zeta):\gamma\in\beta,f_\beta(\gamma)=\aleph_\alpha,\zeta\in f_\beta(\gamma)\}\subseteq\{(\gamma,\zeta):\gamma\in\beta,\zeta\in f_\beta(\gamma)\}$, over which relation $<$ is also a well order with ordinal $\aleph_\alpha^2$, hence by Lemma~\ref{lm.add}, $\sum f_\beta$ is the ordinal of well order relation $<$ over the whole set $\{(\gamma,\zeta):\gamma\in\beta,\zeta\in f_\beta(\gamma)\}$ and so $\sum f_\beta\geqslant\aleph_\alpha^2$. Therefore $\sum f_\beta=\aleph_\alpha^2$.

For the direction from left to right. To a contradiction suppose $\mid\{\gamma\in\beta:f_\beta(\gamma)=\aleph_\alpha\}\mid<\aleph_\alpha$, since $\aleph_\alpha$ is a regular cardinal, there exists an upper bound ordinal $\eta\leqslant\beta\land\eta<\aleph_\alpha$ such that $\forall\eta\leqslant\gamma<\beta$, $f_\beta(\gamma)<\aleph_\alpha$. By Lemma~\ref{lm.add}, $\sum f_\beta$ is the ordinal of well order relation $<$ over the set $\{(\gamma,\zeta):\gamma\in\beta,\zeta\in f_\beta(\gamma)\}=\{(\gamma,\zeta):\gamma<\eta,\zeta\in f_\beta(\gamma)\}\cup\{(\gamma,\zeta):\eta\leqslant\gamma<\beta,\zeta\in f_\beta(\gamma)\}$, then by Lemma~\ref{lm.reg} and the intuitive picture of addition between two ordinals, $\sum f_\beta\leqslant\aleph_\alpha\cdot\eta+\aleph_\alpha=\aleph_\alpha\cdot(\eta+1)<\aleph_\alpha^2$, a contradiction. Therefore $\mid\{\gamma\in\beta:f_\beta(\gamma)=\aleph_\alpha\}\mid=\aleph_\alpha$.
\end{proof}
\begin{remark}
As a concluding remark of this section, the intuitive meaning of Theorem~\ref{th.squ} says that, let $\aleph_\alpha$ be a fixed infinite regular cardinal, then in order to reach $\aleph_\alpha^2$ in total sum by transfinitely adding no more than $\aleph_\alpha$ many ordinals all of which are no greater than $\aleph_\alpha$, on the one hand apparently, just adding together $\aleph_\alpha$ many $\aleph_\alpha$s will work; on the other hand, this is actually an indispensable requirement, namely, there must exist $\aleph_\alpha$ many $\aleph_\alpha$s among all the addenda. Recall Example~\ref{ex.flaw}, readers may speculate that Theorem~\ref{th.squ} will play the crucial role of converting between statements about cardinal $\aleph_\alpha$ and those about ordinal $\aleph_\alpha^2$, while the former is absolute regardless of well orders and the latter can be testified via ordinal logarithm by Theorem~\ref{th.log}. These mathematical results developed within this section will soon get utilized in the following Section~\ref{sec.dtml} so as to strictly define the semantics of modality $\boxbox$ for duplex transfinite modal logic.
\end{remark}

\section{Duplex Transfinite Modal Logic}\label{sec.dtml}

\begin{assumption}[Regular Cardinality]\label{ass.reg}
In the rest of this paper, we additionally assume that all the infinite cardinals which we talk about are regular.
\end{assumption}
\begin{remark}\label{rem.reg}
Readers of course immediately notice that, the above Assumption~\ref{ass.reg} so obviously aims at accommodating to Theorem~\ref{th.squ}, that at first glance it probably seems abruptly artificial. Nevertheless, a while of calm analysis will convince readers that such an assumption is actually reasonable and solid. In fact, simply because the smallest irregular cardinal is $\aleph_\omega$, even restricting our consideration about infinite cardinals only within this set $\{\aleph_n:n\in\omega\}$ still provides us with countably many successive infinite cardinals, which are already sufficient for any practical usage. From another perspective, for now readers are suggested simply to put down this concern anyway, because later on in Section~\ref{sec.fmp}, we shall be devoted to mathematical work for showing Theorem~\ref{th.fmp} together with its following Corollary~\ref{cor.fmp}, which is a counterpart of regular finite model property theorem and actually provides a pretty convincing explanation just for this Assumption~\ref{ass.reg}.
\end{remark}
\begin{definition}[Language $\DTML$]
Let $\pl$ be a non-empty set of propositions. Language $\DTML$ is recursively defined as the following Backus-Naur form, where $p\in\pl$:
\begin{align*}
    \phi::=p\mid\neg\phi\mid(\phi\land\phi)\mid\Box\phi\mid\boxbox\phi
\end{align*}
\end{definition}
Hence, $\STML$ is a sublanguage of $\DTML$. Kripke models keep exactly the same as originally defined in Definition~\ref{def.mod}. Therefore, we only have to supplement semantics definition of the new modality $\boxbox$.
\begin{definition}[Live Successor Cardinality of Pointed Model]\label{def.lscar}
For any pointed model $\M,s$, we define its live successor cardinality as $\lambda_s=\mid\{t\in S:sRt,\exists u\in S,tRu\}\mid$.
\end{definition}
\begin{remark}
Intuitively, ``live successor cardinality of pointed model $\M,s$'' in Definition~\ref{def.lscar} in just the cardinality of live successors of $s$, i.e., successors which are not dead ends and possess some successor on their own. In contrast with $\kappa_s$ in Definition~\ref{def.scar} and Remark~\ref{rem.scar}, here as for $\lambda_s$, we must reasonably ignore those dummy successors which are no longer succeeded by any of their own successors, because they have already been ruled out as \textit{absolutely} impossible in our posterior probability distribution and thus will never be allocated any support of evidence, just like explained in Example~\ref{ex.inf}.
\end{remark}
\begin{definition}[Semantics of $\DTML$]\label{def.dsem}
Semantics of $\DTML$ extends semantics of $\STML$ in Definition~\ref{def.ssem} with the new modality $\boxbox$. First, for any pointed model $\M,s$, arbitrarily fix a bijective function $f_s:\lambda_s\to\{t\in S:sRt,\exists u\in S,tRu\}$. Next, for each possible world $t\in\{t\in S:sRt,\exists u\in S,tRu\}$, arbitrarily fix a bijective function $g_t:\kappa_t\to\{u\in S:tRu\}$. Define the following functions:
\begin{itemize}
    \item $h:\lambda_s\to\Ord$, $\forall\gamma\in\lambda_s$, $h(\gamma)=\kappa_{f_s(\gamma)}$.
    \item $h^+:\lambda_s\to\Ord$, $\forall\gamma\in\lambda_s$, $h^+(\gamma)=$ the ordinal of well order relation under $g_{f_s(\gamma)}$ over the subset $\{u\in S:f_s(\gamma)Ru\land(\M,u\vDash\phi)\}\subseteq\{u\in S:f_s(\gamma)Ru\}$.
    \item $h^-:\lambda_s\to\Ord$, $\forall\gamma\in\lambda_s$, $h^-(\gamma)=$ the ordinal of well order relation under $g_{f_s(\gamma)}$ over the subset $\{u\in S:f_s(\gamma)Ru\land\neg(\M,u\vDash\phi)\}\subseteq\{u\in S:f_s(\gamma)Ru\}$.
\end{itemize}
The semantics of modality $\boxbox$ in language $\DTML$ is then defined as the following:
\begin{align*}
    \M,s\vDash\boxbox\phi\iff & \left\{\begin{array}{llll}
        {[\sum h^+]}_\zeta>[\sum h^-]_\zeta, & & & \text{if }\mid\sum h\mid=\aleph_\zeta\in\InfCard\\
        \\
        \forall t,u\in S\text{ s.t. }sRt\land tRu,\M,u\vDash\phi, & & & \text{otherwise}
    \end{array}\right.
\end{align*}
\end{definition}
In Definition~\ref{def.dsem}, functions $f_s$ and $g_t$ are allowed to be chosen arbitrarily, hence we have to prove that Definition~\ref{def.dsem} is indeed well defined, independent of choices on $f_s$ and $g_t$. But first let us briefly explain the intuition. Complex as its form might seem, Definition~\ref{def.dsem} essentially manages to distinguish between finer-grained ordinals within the classical Kripke model, where no information about any kind of well order among successors of $s$ exists at all and thus only coarser-grained cardinality is absolute. The serious philosophical flaw mentioned in Example~\ref{ex.flaw} can then get favourably dissolved under our logic's currently more advanced resolution, which is able to achieve with the help of Theorem~\ref{th.log} and Theorem~\ref{th.squ}, as readers will surely come to realize the whole vital mechanism through rigorous proof for the following Proposition~\ref{pro.dsem}:
\begin{proposition}\label{pro.dsem}
The semantics of modality $\boxbox$ is well defined in Definition~\ref{def.dsem}, i.e., it does not matter at all how we arbitrarily fix bijective functions $f_s$ and $g_t$.
\end{proposition}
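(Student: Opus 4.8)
The plan is to strip Definition~\ref{def.dsem} down to the genuinely choice-dependent data and show that every quantity on which the semantics of $\boxbox\phi$ actually depends is a choice-free cardinal invariant of $\M,s$ and $\phi$. Write $G:=\{(t,u)\in S\times S:sRt\land tRu\}$ for the set of length-two $R$-paths out of $s$, $G^+:=\{(t,u)\in G:\M,u\vDash\phi\}$ and $G^-:=G\setminus G^+$. First I would note that, by Lemma~\ref{lm.add}, $\sum h$ is the ordinal of a well order on a set canonically in bijection with $G$, so $\mid\sum h\mid=\mid G\mid$; hence the branching condition of Definition~\ref{def.dsem} (whether $\mid\sum h\mid$ is an infinite cardinal, and if so which $\aleph_\zeta$) is independent of $f_s$ and of the $g_t$, and so is the clause used in the ``otherwise'' case, which is a plain first-order assertion about $\M,s$. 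It therefore remains to treat the main case; fix $\zeta$ with $\mid\sum h\mid=\aleph_\zeta$, which is a regular cardinal by Assumption~\ref{ass.reg}.

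Next I would record the bounds that let us invoke Theorem~\ref{th.squ}, and fix the remaining notation. Projecting $G$ onto its first coordinate maps it onto the set of live successors of $s$, so $\lambda_s\leqslant\mid G\mid=\aleph_\zeta$; and for each live successor $t$ of $s$ the set $\{(t,u):tRu\}\subseteq G$ has cardinality $\kappa_t$, so $\kappa_t\leqslant\aleph_\zeta$. Hence $h^+,h^-\colon\lambda_s\to(\aleph_\zeta+1)$ with $\lambda_s\leqslant\aleph_\zeta$. Let $T^+$ (resp.\ $T^-$) be the set of live successors $t$ of $s$ with $\mid\{u\in S:tRu\land\M,u\vDash\phi\}\mid=\aleph_\zeta$ (resp.\ $\mid\{u\in S:tRu\land\neg(\M,u\vDash\phi)\}\mid=\aleph_\zeta$). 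The key local observation is that, for $t=f_s(\gamma)$, the order type $h^+(\gamma)$ of $\{u:tRu\land\M,u\vDash\phi\}$ under $g_t$ is an ordinal $\leqslant\kappa_t\leqslant\aleph_\zeta$ whose cardinality is $\mid\{u:tRu\land\M,u\vDash\phi\}\mid$, so it equals $\aleph_\zeta$ precisely when that set has cardinality $\aleph_\zeta$ --- a condition that mentions no well order. Therefore $\mid\{\gamma\in\lambda_s:h^+(\gamma)=\aleph_\zeta\}\mid=\mid T^+\mid$, and similarly for $h^-$ and $\mid T^-\mid$. Finally, applying Lemma~\ref{lm.add} to $h^+$ and to $h^-$ gives $\mid\sum h^+\mid=\mid G^+\mid$ and $\mid\sum h^-\mid=\mid G^-\mid$, both choice-free.

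Now I would pin down $\log_{\aleph_\zeta}(\sum h^+)$ from $\mid G^+\mid$ and $\mid T^+\mid$. If $\mid G^+\mid=0$, then $\sum h^+=0$. If $0<\mid G^+\mid<\aleph_\zeta$, then $\sum h^+$ is a nonzero ordinal of cardinality $<\aleph_\zeta$, hence $0<\sum h^+<\aleph_\zeta$ and $\log_{\aleph_\zeta}(\sum h^+)=0$. If $\mid G^+\mid=\aleph_\zeta$, then $\sum h^+$ has cardinality $\aleph_\zeta$, so $\aleph_\zeta\leqslant\sum h^+\leqslant\aleph_\zeta\cdot\lambda_s\leqslant\aleph_\zeta^2$, and by Theorem~\ref{th.squ} together with the local observation, $\sum h^+=\aleph_\zeta^2$ exactly when $\mid T^+\mid=\aleph_\zeta$, so $\log_{\aleph_\zeta}(\sum h^+)$ is $2$ if $\mid T^+\mid=\aleph_\zeta$ and $1$ otherwise; the same computation applies with $h^-$, $\mid G^-\mid$, $\mid T^-\mid$. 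Finally, by Theorem~\ref{th.log}, for nonzero $\beta,\gamma$ one has $[\beta]_\zeta>[\gamma]_\zeta\iff\log_{\aleph_\zeta}\beta>\log_{\aleph_\zeta}\gamma$, while $[0]_\zeta=\{0\}$ is the least class; hence the truth value of $[\sum h^+]_\zeta>[\sum h^-]_\zeta$ --- i.e.\ of $\M,s\vDash\boxbox\phi$ in the main case --- depends only on the choice-free cardinals $\mid G^+\mid,\mid G^-\mid,\mid T^+\mid,\mid T^-\mid$, which completes the argument.

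The step I expect to be the real work is the local reduction in the middle paragraph: turning the order types $h^{+}(\gamma),h^{-}(\gamma)$ and their transfinite sums --- which a priori depend both on the arbitrary well orders $g_t$ and on the order in which $f_s$ enumerates the live successors --- into honest cardinal invariants. This is exactly where Theorem~\ref{th.squ} (hence the regularity Assumption~\ref{ass.reg}) and Theorem~\ref{th.log} are indispensable, and where one must first secure the bounds $\lambda_s\leqslant\aleph_\zeta$ and $\kappa_t\leqslant\aleph_\zeta$ and then separately dispose of the degenerate cases $\mid G^{+}\mid=0$ or $\mid G^{-}\mid=0$ and of the bottom class $[0]_\zeta$.
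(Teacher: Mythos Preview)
Your proof is correct and follows essentially the same route as the paper: reduce the comparison $[\sum h^+]_\zeta>[\sum h^-]_\zeta$ to a comparison of $\log_{\aleph_\zeta}$-values via Theorem~\ref{th.log}, then show those logarithm values (which lie in $\{0,1,2\}$) are fixed by choice-free cardinal data using Theorem~\ref{th.squ} and Lemma~\ref{lm.add}. Your write-up is in fact more explicit than the paper's --- you name the invariants $\mid G^\pm\mid$ and $\mid T^\pm\mid$, compute the logarithm in every case, and handle the zero class and the ``otherwise'' branch separately --- whereas the paper compresses the same argument into a case split on $(\log_{\aleph_\zeta}\sum h^+,\log_{\aleph_\zeta}\sum h^-)$ and leaves those edge cases implicit.
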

\begin{proof}
We only need to pay attention to the nontrivial case when $\mid\sum h\mid=\aleph_\zeta$ and $\sum h^+,\sum h^-\geqslant1$. As $\kappa_t\geqslant1$ for all $t\in\{t\in S:sRt,\exists u\in S,tRu\}$, we obviously have $\lambda_s,\kappa_t\leqslant\aleph_\zeta$, thus $\sum h,\sum h^+,\sum h^-\leqslant\aleph_\zeta^2$, so $\log_{\aleph_\zeta}(\sum h^+),\log_{\aleph_\zeta}(\sum h^-)\leqslant 2$. By Theorem~\ref{th.log}, $[\sum h^+]_\zeta>[\sum h^-]_\zeta$ iff $\log_{\aleph_\zeta}(\sum h^+)>\log_{\aleph_\zeta}(\sum h^-)$. If $\log_{\aleph_\zeta}(\sum h^+)=1\land\log_{\aleph_\zeta}(\sum h^-)=0$, then $\mid\sum h^+\mid=\aleph_\zeta>\mid\sum h^-\mid$, and cardinality will not vary with different functions $f_s$ and $g_t$. If $\log_{\aleph_\zeta}(\sum h^+)=2\land\log_{\aleph_\zeta}(\sum h^-)<2$, then by Theorem~\ref{th.squ}, $\mid\{\gamma\in\lambda_s:h^+(\gamma)=\aleph_\zeta\}\mid=\aleph_\zeta\land\mid\{\gamma\in\lambda_s:h^-(\gamma)=\aleph_\zeta\}\mid<\aleph_\zeta$. Similarly, cardinality will not vary with different functions $f_s$ and $g_t$. Therefore, the semantics of modality $\boxbox$ in Definition~\ref{def.dsem} is well defined.
\end{proof}
Now from Proposition~\ref{pro.dsem} we know that $\boxbox$ is indeed a well defined modality, but after all is it really a new modality? In fact in Example~\ref{ex.inf}, we have analyzed the close similarity on intuitions of modality $\boxbox$ and two consecutive modalities $\Box\Box$, nevertheless, the following Proposition~\ref{pro.dif} clearly claims their difference:
\begin{proposition}\label{pro.dif}
Generally speaking, neither of the following two formulae is valid:
\begin{align*}
    \boxbox\phi\to\Box\Box\phi\\
    \Box\Box\phi\to\boxbox\phi
\end{align*}
\end{proposition}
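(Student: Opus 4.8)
The plan is to refute each of the two schematic implications by exhibiting a single pointed Kripke model that falsifies it when $\phi$ is taken to be a proposition letter $p\in\pl$; since validity means truth at every pointed model, one such witness per direction suffices. The guiding idea is that, by Definition~\ref{def.dsem}, the truth of $\boxbox\phi$ at $s$ is governed by the \emph{live} successor count $\lambda_s$ and by finer-grained ordinal sums, whereas the nested $\Box\Box\phi$ only ever invokes the coarse cardinality clause of Definition~\ref{def.ssem} twice over. Forcing $\kappa_s$ and $\lambda_s$ to diverge, and exploiting Theorem~\ref{th.squ} together with Theorem~\ref{th.log}, will drive the two modalities to disagree in either direction.

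For $\Box\Box\phi\to\boxbox\phi$, I would reuse the model $\M_{\ref{ex.inf}}$ of Example~\ref{ex.inf} and simply verify the two assertions. On the $\Box\Box p$ side: $\kappa_s=\aleph_0$; each $t_i$ with $i\geqslant1$ has finite $\kappa_{t_i}$ with its unique successor satisfying $p$, hence satisfies $\Box p$, while $t_0$ has $\aleph_0$ many $\neg p$-successors and no $p$-successor, hence fails $\Box p$; so the set of $\Box p$-successors of $s$ has cardinality $\aleph_0>1$ and $\M_{\ref{ex.inf}},s\vDash\Box\Box p$. On the $\boxbox p$ side: $\lambda_s=\aleph_0$, and the functions of Definition~\ref{def.dsem} come out as $h^+=1$ off the $t_0$-index and $h^+=0$ at it, while $h^-=\aleph_0$ at the $t_0$-index and $h^-=0$ elsewhere, whence $\sum h^+=\aleph_0=\sum h^-$ with $\mid\sum h\mid=\aleph_0$; by Theorem~\ref{th.log}, $\log_{\aleph_0}(\sum h^+)=1=\log_{\aleph_0}(\sum h^-)$, so $[\sum h^+]_0=[\sum h^-]_0$ and $\M_{\ref{ex.inf}},s\not\vDash\boxbox p$. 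A cleaner alternative is a root $s$ with $\aleph_0$ many dead-end successors plus one live successor $t_0$ whose unique successor satisfies $\neg p$: then $\kappa_s=\aleph_0$ but $\lambda_s=1$ and $\sum h$ is finite, so $\boxbox p$ is decided by the universal ``all grandchildren'' clause and fails, whereas every dead end satisfies $\Box p$ vacuously and so $\M,s\vDash\Box\Box p$.

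For $\boxbox\phi\to\Box\Box\phi$ no model already in the paper works; notably $\M_{\ref{ex.flaw}}$ satisfies \emph{both} $\boxbox p$ and $\Box\Box p$. I would instead build a fresh model $\M$ in which the root $s$ has $\aleph_0$ many ``type A'' successors, each carrying $\aleph_0$ many successors all satisfying $p$, together with $\aleph_0$ many ``type B'' successors, each carrying exactly one successor, which satisfies $\neg p$. For $\boxbox p$: $\lambda_s=\aleph_0$; since $\aleph_0$ many live successors have $h(\gamma)=\aleph_0$, Theorem~\ref{th.squ} gives $\sum h=\aleph_0^2$, so $\mid\sum h\mid=\aleph_0$; likewise $\aleph_0$ many indices (the type A's) satisfy $h^+(\gamma)=\aleph_0$, so $\sum h^+=\aleph_0^2$ by Theorem~\ref{th.squ}, while no index satisfies $h^-(\gamma)=\aleph_0$, so $\sum h^-\neq\aleph_0^2$ by Theorem~\ref{th.squ}; hence by Theorem~\ref{th.log}, $\log_{\aleph_0}(\sum h^+)=2>\log_{\aleph_0}(\sum h^-)$, that is $[\sum h^+]_0>[\sum h^-]_0$, so $\M,s\vDash\boxbox p$. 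For $\Box\Box p$: every type A satisfies $\Box p$ (infinitely many $p$-successors, no $\neg p$-successor) and every type B fails $\Box p$ (finite successor count, sole successor satisfying $\neg p$), so the $\Box p$-successors of $s$ and the successors of $s$ at which $\Box p$ fails both have cardinality $\aleph_0$; the strict inequality required in Definition~\ref{def.ssem} then fails, so $\M,s\not\vDash\Box\Box p$, as desired.

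I expect the only real difficulty to be bookkeeping. In the $\boxbox$ computations I must make sure the ordinal sums $\sum h$, $\sum h^+$, $\sum h^-$ are evaluated correctly and compared inside $\Ord/{\sim_0}$ --- which Proposition~\ref{pro.dsem}, through Theorems~\ref{th.log} and~\ref{th.squ}, guarantees is independent of the arbitrary bijections $f_s$ and $g_t$ --- and I must keep the ``live'' accounting exact, since both counterexamples turn precisely on $\lambda_s\neq\kappa_s$ steering $\boxbox$ and $\Box\Box$ into different semantic clauses. One small point worth stating explicitly is that a dead-end world satisfies $\Box\phi$ vacuously for every $\phi$, which is exactly what makes the alternative model above work.
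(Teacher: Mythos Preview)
Your proposal is correct. For the direction $\Box\Box\phi\to\boxbox\phi$, your primary witness $\M_{\ref{ex.inf}}$ is precisely what the paper uses (it reappears verbatim as Example~\ref{ex.11to2} in the paper's proof), and your computation $\log_{\aleph_0}(\sum h^+)=1=\log_{\aleph_0}(\sum h^-)$ is accurate.

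For the direction $\boxbox\phi\to\Box\Box\phi$, your construction differs genuinely from the paper's. The paper's Example~\ref{ex.2to11} gives $s$ only \emph{two} successors, $t_0$ (with a single $\neg p$-child) and $t_1$ (with $\aleph_0$ many $p$-children); since $\kappa_s=2$ is finite, $\Box\Box p$ is decided by the universal clause of Definition~\ref{def.ssem} and fails because $t_0\not\vDash\Box p$, while $|\sum h|=\aleph_0$ so $\boxbox p$ holds via $\log_{\aleph_0}\aleph_0=1>0=\log_{\aleph_0}1$. Your model instead keeps $\kappa_s=\aleph_0$ infinite and pushes the computation all the way up to $\log_{\aleph_0}(\sum h^+)=2$ via Theorem~\ref{th.squ}. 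Both arguments are sound; the paper's is more economical (no appeal to Theorem~\ref{th.squ} is needed at all), whereas yours has the pedagogical merit of exercising the full ordinal-arithmetic machinery that Definition~\ref{def.dsem} was built to support.

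One small correction to your closing commentary: neither of your two \emph{main} counterexamples actually has $\lambda_s\neq\kappa_s$. In $\M_{\ref{ex.inf}}$ and in your fresh type-A/type-B model every successor of $s$ is live, so $\lambda_s=\kappa_s=\aleph_0$ in both; the divergence between $\boxbox$ and $\Box\Box$ there comes from ordinal-versus-cardinal granularity, not from a live/dead discrepancy. Only your ``cleaner alternative'' with dead ends genuinely exploits $\lambda_s=1<\aleph_0=\kappa_s$ to steer the two modalities into different semantic clauses. This is a slip in the surrounding discussion, not in the proofs themselves.
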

\begin{proof}
Please cf. the following Example~\ref{ex.2to11} and Example~\ref{ex.11to2}.
\end{proof}
\begin{example}\label{ex.2to11}
In this Kripke model $\M_{\ref{ex.2to11}}$, possible successor $t_1$ gets a posteriori supported by $\aleph_0$ many new evidences, on all of which $p$ holds. Now we have $\M_{\ref{ex.2to11}},s\vDash\boxbox p\land\neg\Box\Box p$.
$$\xymatrix@R=25pt@C=10pt{
\M_{\ref{ex.2to11}} & & s\ar@{->}[dll]\ar@{->}[drr] & & & & &\\
t_0\ar@{->}[d] & & & & t_1\ar@{->}[dlll]\ar@{->}[dll]\ar@{->}[drr]\ar@{->}[drrr] & & &\\
\neg p & p & p & \cdots & \aleph_0 & \cdots & p & p
}$$
\end{example}
\begin{example}\label{ex.11to2}
This Kripke model $\M_{\ref{ex.11to2}}$ is a reexamination of the Kripke model $\M_{\ref{ex.inf}}$ in Example~\ref{ex.inf}. Now we have $\M_{\ref{ex.11to2}},s\vDash\neg\boxbox p\land\Box\Box p$.
$$\xymatrix@R=25pt@C=5pt{
\M_{\ref{ex.11to2}} & & & & & & & s\ar@{->}[dllll]\ar@{->}[d]\ar@{->}[dr]\ar@{->}[drrrrr]\ar@{->}[drrrrrr] & & &\\
& & & t_0\ar@{->}[dlll]\ar@{->}[dll]\ar@{->}[drr]\ar@{->}[drrr] & & & & t_1\ar@{->}[d] & t_3\ar@{->}[d] & \cdots & \aleph_0 & \cdots & t_4\ar@{->}[d] & t_2\ar@{->}[d]\\
\neg p & \neg p & \cdots & \aleph_0 & \cdots & \neg p & \neg p & p & p & \cdots & & \cdots & p & p
}$$
\end{example}
At last, we conclude this section by showing that this new modality $\boxbox$ successfully solves our target problem proposed in Example~\ref{ex.flaw}, as the following Example~\ref{ex.fix} illustrates:
\begin{example}\label{ex.fix}
This Kripke model $\M_{\ref{ex.fix}}$ is a reexamination of the Kripke model $\M_{\ref{ex.flaw}}$ in Example~\ref{ex.flaw}. Now we have $\M_{\ref{ex.fix}},s\vDash\boxbox p\land\Box\Box p$, therefore, the flaw in Example~\ref{ex.flaw} has gotten fixed satisfactorily.
$$\xymatrix@R=25pt@C=2.5pt{
\M_{\ref{ex.fix}} & & & & & & & & s\ar@{->}[dllll]\ar@{->}[drrrr] & & & & & & & &\\
& & & & t_0\ar@{->}[dllll]\ar@{->}[dlll]\ar@{->}[dll]\ar@{->}[drr]\ar@{->}[drrr] & & & & & & & & t_1\ar@{->}[dllll]\ar@{->}[dlll]\ar@{->}[dll]\ar@{->}[drr]\ar@{->}[drrr] & & \cdots & \aleph_0 & \cdots\\
\neg p & p & p & \cdots & \aleph_0 & \cdots & p & p & \neg p & p & p & \cdots & \aleph_0 & \cdots & p & p & \cdots
}$$
\end{example}

\section{Sherlock Holmes' Saying}\label{sec.det}

We are now ready for an exciting application of transfinite modal logic as explanation for Bayesian Reasoning: a perfect formalization of Sherlock Holmes' famous saying, ``When you have eliminated the impossible, whatever remains, however improbable, must be the truth.'' Intuitively, in this scenario a radical conversion happens: we drastically revise our belief from originally thinking something improbable to finally assuring it as the only necessary truth. Such intensive vibration does not seem usual in logic, but it does take place so naturally in our daily life even if we are not a detective, for instance as we have demonstrated, winning the first prize of a lottery. We shall see that in transfinite modal logic, even stronger evidence is required in order for the improbable as of the prior probability to become the necessary as of the posterior probability, which definitely fits pretty well with our intuition about how we overthrow an existing belief and establish the opposite one.
\begin{example}\label{ex.det}
In this Kripke model $\M_{\ref{ex.det}}$, where $i,j\in\omega$, for the prior probability distribution there are $\aleph_i$ many supports for $\neg p$ but only one support for $p$, hence by default $p$ is improbable, namely $\M_{\ref{ex.det}},s\vDash\Box\neg p$. However at the next time point, $\aleph_j$ many new evidences for $p$ are discovered while each possible $\neg p$ branch is only assigned with one evidence, thus now it settles on the comparison of magnitude between $i$ and $j$.

For one case suppose $i<j$, namely the new evidence is even stronger so as to deny our a priori assumption, and accordingly by our Bayesian reasoning, we a posteriori have $\M_{\ref{ex.det}},s\vDash\boxbox p$ in contrast.
$$\xymatrix@R=25pt@C=5pt{
\M_{\ref{ex.det}} & & & & & & & & s\ar@{->}[dllll]\ar@{->}[dr]\ar@{->}[drr]\ar@{->}[drrrrrr]\ar@{->}[drrrrrrr] & & & &\\
& & & & p\ar@{->}[dlll]\ar@{->}[dll]\ar@{->}[drr]\ar@{->}[drrr] & & & & & \neg p\ar@{->}[d] & \neg p\ar@{->}[d] & \cdots & \aleph_i & \cdots & \neg p\ar@{->}[d] & \neg p\ar@{->}[d]\\
& p\ar@{-->}[dl]\ar@{-->}[dr] & p\ar@{-->}[dl]\ar@{-->}[dr] & \cdots & \aleph_j & \cdots & p\ar@{-->}[dl]\ar@{-->}[dr] & p\ar@{-->}[dl]\ar@{-->}[dr] & & \neg p\ar@{-->}[d] & \neg p\ar@{-->}[d] & \cdots & & \cdots & \neg p\ar@{-->}[d] & \neg p\ar@{-->}[d]\\
& & & & \cdots & & & & & & & \cdots & & \cdots & &
}$$
For another case suppose $i=j$, namely the new evidence is approximately of the same weight as our default bias, then it is able to shock our original belief but yet not strong enough to establish its own, hence we have $\M_{\ref{ex.det}},s\vDash\neg\boxbox\neg p\land\neg\boxbox p$. Nonetheless, as we indicate with dotted line in Kripke model $\M_{\ref{ex.det}}$, if later on we continue obtaining more and more new evidences supporting for $p$, we shall finally turn our belief completely over. As mentioned in Example~\ref{ex.inf}, although in this paper we do not formally introduce more generalized modalities such as $\Box^3$ or $\Box^4$, readers should be able to reasonably imagine that in such a condition we may end with $\M_{\ref{ex.det}},s\vDash\Box^3p$. Of course on the other hand, depending on the actual situation, it may also turn out that we later on obtain more new evidences supporting for $\neg p$ and thus result in $\M_{\ref{ex.det}},s\vDash\Box^3\neg p$.

For the final case suppose $i>j$, namely the new evidence is still too weak to currently impose any visible impact, and so we keep holding $\M_{\ref{ex.det}},s\vDash\boxbox\neg p$. Nevertheless, those $\aleph_j$ many new evidences are never utterly useless: we may also imagine that if in the next dotted round, so many possibilities for $\neg p$ even become dead end that the total amount of evidences for $\neg p$ decreases to $\aleph_j$ or even fewer, then these $\aleph_j$ many new evidences for $p$ will start to play their own role regarding $\Box^3$ formulae.
\end{example}
As a conclusion, we have fully learnt from the above Example~\ref{ex.det} that it is the relative magnitude between cardinals or ordinals, rather than their absolute value, that ultimately matters to transfinite modal logic's formulae. Different relative magnitudes stand for different levels of strength of evidence and can be quantitatively calculated and compared by modal formulae. Therefore, although as mentioned in Example~\ref{ex.fin}, transfinite modal logic is not expressible enough for arbitrary specific value of probability between $0$ and $1$, it is after all expressible to some extent for a certain degree of quantitative computation on probability and thus we view it as a semi-quantitative explanation for Bayesian reasoning.

\section{``Finite'' Model Property}\label{sec.fmp}

We have already largely justified our transfinite modal logic's soundness as well as usefulness, through both conceptual analysis and specific examples. Nevertheless just in case, extremely careful readers might still wonder that our Assumption~\ref{ass.reg} looks more or less artificial anyway. Now in this section, we offer a reconsideration of such an issue, and decide to settle on solving this problem permanently via strict mathematical argument, as we have previously forecast in Remark~\ref{rem.reg}. In detail, we shall show that transfinite modal logic enjoys a counterpart of finite model property: for any $\DTML$ formula $\phi$, if $\phi$ is satisfiable in some Kripke model, then $\phi$ is also satisfiable in a Kripke model with fewer than $\aleph_\omega$ many possible worlds. Of course, since here infinity plays a crucial role in our logic, we do not expect that $\phi$ can always be satisfied in some \textit{really} finite Kripke model; however, we shall prove that $\phi$ can always be satisfied in an $\aleph_n$-large Kripke model, where $n\in\omega$, and as explained in Remark~\ref{rem.fin}, this may be understood simply as a counterpart of ordinary finite model property theorem~\cite{Libkin04}. Hence all in all, readers should be anyhow convinced that Assumption~\ref{ass.reg} is actually reasonable, innocent and self-consistent.
\begin{definition}[Degree of Formula]
For any $\DTML$ formula $\phi$, its degree $\deg(\phi)\in\omega$ is recursively defined as the following:
\begin{align*}
    \deg(p)= & 0\\
    \deg(\neg\phi)= & \deg(\phi)\\
    \deg(\phi\land\psi)= & \max\{\deg(\phi),\deg(\psi)\}\\
    \deg(\Box\phi)= & \deg(\phi)+1\\
    \deg(\boxbox\phi)= & \deg(\phi)+2
\end{align*}
\end{definition}
\begin{definition}[Walk in Kripke Model]
Let $\M,s$ be a fixed pointed model. A walk starting from $s$ with length $n\in\omega$ is a function $w_n:n\to S$ such that $sRw_n(0)$ if $n>0$, and that for any $k\in\omega\land0\leqslant k<n-1$, $w_n(k)Rw_n(k+1)$.
\end{definition}
\begin{definition}[Unravelled Pointed Model]
Let $\M,s$ be a fixed pointed model. For any $k\in\omega$, the unravelled pointed model until depth $k$ is a pointed model $\M_k,\emptyset$, where $\M_k=(S_k,R_k,V_k)$ and:
\begin{itemize}
    \item $S_k=\{w_n:w_n$ is a walk starting from $s$ with length $n\in\omega,n\leqslant k\}$. $w_0=\emptyset\in S_k$.
    \item For any $w_m,w_n\in S_k$, where $m,n\in\omega\land m,n\leqslant k$, $w_mR_kw_n$ iff $m+1=n\land w_m\subset w_n$.
    \item For any $p\in\pl$ and any $w_n\in S_k$, where $n\in\omega\land n\leqslant k$, if $n=0$, then $w_n\in V_k(p)$ iff $s\in V(p)$; otherwise, $w_n\in V_k(p)$ iff $w_n(n-1)\in V(p)$.
\end{itemize}
\end{definition}
\begin{proposition}\label{pro.unr}
Let $\M,s$ be a fixed pointed model. For any $k\in\omega$, the unravelled pointed model $\M_k,\emptyset$ is a tree with root $\emptyset$ and maximal depth $k$, moreover, for any $\DTML$ formula $\phi$ such that $\deg(\phi)\leqslant k$, $\M,s\vDash\phi\iff\M_k,\emptyset\vDash\phi$.
\end{proposition}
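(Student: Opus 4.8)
The plan is to dispatch the structural claim by a short direct argument and then obtain the semantic equivalence by induction on $\phi$, carrying along the depth at which the formula is evaluated. For the tree claim, I would note that $w_mR_kw_n$ forces $n=m+1$ and $w_m\subset w_n$, so each $w_n\in S_k$ with $n>0$ has precisely one $R_k$-predecessor, namely the unique $w_{n-1}\in S_k$ with $w_{n-1}\subset w_n$, whereas $w_0=\emptyset$ has none; iterating the predecessor reaches $\emptyset$ in exactly $n$ steps, so $\M_k$ is a tree with root $\emptyset$ whose worlds all lie at depth equal to their length, hence at depth at most $k$, which is the maximal depth asserted.

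For the semantic claim, introduce the projection $\pi\colon S_k\to S$ with $\pi(\emptyset)=s$ and $\pi(w_n)=w_n(n-1)$ for $n>0$. The key observation is that for any $w_n\in S_k$ with $n<k$ the map $w_{n+1}\mapsto w_{n+1}(n)$ is a bijection from the set of $R_k$-successors of $w_n$ onto $\{t\in S:\pi(w_n)Rt\}$, that $\pi$ restricts to this bijection, that it carries each valuation $V(p)$ to $V_k(p)$ by definition, and — when $n+1<k$ — that it sends live successors of $w_n$ exactly onto live successors of $\pi(w_n)$. I would then prove by induction on $\phi$ the strengthened statement: for every $n\leqslant k$, every $w_n\in S_k$, and every $\DTML$ formula $\phi$ with $\deg(\phi)\leqslant k-n$, one has $\M_k,w_n\vDash\phi\iff\M,\pi(w_n)\vDash\phi$. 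The atomic and Boolean cases are immediate. For $\Box\psi$ we have $\deg(\psi)\leqslant k-(n+1)$ and $n<k$, so the successor bijection together with the induction hypothesis applied at depth $n+1$ gives $\kappa_{w_n}=\kappa_{\pi(w_n)}$ and a correspondence between the successors satisfying $\psi$ and those satisfying $\neg\psi$ in the two models, whence both clauses of the semantics of $\Box$ match.

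The real work, and the step I expect to be the main obstacle, is the case $\boxbox\psi$, where $\deg(\psi)\leqslant k-(n+2)$ and hence $n+2\leqslant k$. Starting from arbitrary admissible bijections $f_{\pi(w_n)}$ and $g_t$ (for $t$ a live successor of $\pi(w_n)$) used to evaluate $\boxbox\psi$ at $\pi(w_n)$ in $\M$, I would transport them along $\pi$ to admissible bijections $f_{w_n}$ and $g_{w_{n+1}}$ for evaluating $\boxbox\psi$ at $w_n$ in $\M_k$, letting $f_{w_n}(\gamma)$ be the walk extending $w_n$ by $f_{\pi(w_n)}(\gamma)$ and $g_{w_{n+1}}(\delta)$ the walk extending $w_{n+1}$ by $g_{\pi(w_{n+1})}(\delta)$. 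Since $\pi$ is a bijection on the relevant successor sets and, by the induction hypothesis at depth $n+2$, identifies $\{u\in S_k:f_{w_n}(\gamma)R_ku\land\M_k,u\vDash\psi\}$ with $\{u'\in S:\pi(f_{w_n}(\gamma))Ru'\land\M,u'\vDash\psi\}$, the transported well orders are order-isomorphic to the originals; consequently the functions $h,h^+,h^-$ computed in $\M_k$ coincide with those computed in $\M$, so $\sum h,\sum h^+,\sum h^-$, the cardinal $\mid\sum h\mid=\aleph_\zeta$, and the comparison $[\sum h^+]_\zeta>[\sum h^-]_\zeta$ are the same in both models, while the ``otherwise'' clause transfers because the pairs $(t,u)$ with $\pi(w_n)RtRu$ in $\M$ correspond bijectively to pairs of consecutive $R_k$-successors issuing from $w_n$. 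Finally Proposition~\ref{pro.dsem} guarantees that this particular transported choice nonetheless computes the genuine truth value of $\boxbox\psi$ at $w_n$ in $\M_k$, so it agrees with the value at $\pi(w_n)$ in $\M$. Specialising to $n=0$ and $w_0=\emptyset$ yields $\M,s\vDash\phi\iff\M_k,\emptyset\vDash\phi$ for every $\phi$ with $\deg(\phi)\leqslant k$. The two points demanding care are the degree-versus-depth bookkeeping — so that every appeal to the induction hypothesis is licensed and every walk named actually lies in $S_k$ — and the explicit invocation of Proposition~\ref{pro.dsem} to neutralise the arbitrariness of the auxiliary bijections.
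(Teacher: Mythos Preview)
Your proposal is correct and is precisely the standard unravelling argument; the paper's own proof consists of the single word ``Obvious.'' You have simply spelled out what the author left implicit, including the one genuinely nonroutine step---transporting the auxiliary bijections $f_s,g_t$ through $\pi$ in the $\boxbox$ case and invoking Proposition~\ref{pro.dsem} to certify that this particular choice computes the correct truth value---so there is nothing to compare beyond level of detail.
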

\begin{proof}
Obvious.
\end{proof}
Therefore by Proposition~\ref{pro.unr}, $\phi$ is satisfiable in some Kripke model if and only if $\phi$ is satisfiable in some tree-like Kripke model with maximal depth $\deg(\phi)$. Till now, our reasoning is no different from usual standard proof of normal modal logic's finite model property~\cite{Blackburn01}. However, presently this finite-depth tree-like Kripke model may still bear arbitrarily large width, hence requiring our further treatment.
\begin{proposition}\label{pro.fin}
If $\pl$ is finite, then for any $k\in\omega$, there exist only finite many nonequivalent $\DTML$ formulae $\phi$ such that $\deg(\phi)\leqslant k$.
\end{proposition}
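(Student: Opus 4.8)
The plan is to induct on $k\in\omega$, exploiting the standard fact that every $\DTML$ formula is, up to logical equivalence, a Boolean combination of finitely many ``prime'' subformulae (atoms, boxed formulae, or $\boxbox$-formulae), together with the elementary observation that a finite supply of building blocks admits only finitely many nonequivalent Boolean combinations. For the base case $k=0$, a formula $\phi$ with $\deg(\phi)=0$ contains no modal operator, hence is a Boolean combination of propositions from $\pl$; since $\pl$ is finite, the truth value of such a $\phi$ at a pointed model $\M,s$ depends only on which subset of $\pl$ holds at $s$, and there are $2^{|\pl|}$ such subsets, so $\phi$ is determined up to equivalence by the family of subsets on which it is true, giving at most $2^{2^{|\pl|}}$ nonequivalent formulae of degree $0$.

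For the inductive step, assume that for some $k$ there are only finitely many nonequivalent $\DTML$ formulae of degree $\leqslant k$, with class representatives $\psi_1,\dots,\psi_N$. Take any $\phi$ with $\deg(\phi)\leqslant k+1$ and descend through its syntax tree across the Boolean connectives $\neg,\land$ until reaching subformulae whose principal connective is a proposition, $\Box$, or $\boxbox$; call these the prime subformulae of $\phi$, so that $\phi$ is a Boolean combination of them. Each prime subformula has one of the forms $p$ with $p\in\pl$, or $\Box\chi$ with $\deg(\chi)=\deg(\Box\chi)-1\leqslant k$, or $\boxbox\chi$ with $\deg(\chi)=\deg(\boxbox\chi)-2\leqslant k-1\leqslant k$. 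By the induction hypothesis each such $\chi$ is equivalent to one of $\psi_1,\dots,\psi_N$, and replacing $\chi$ by an equivalent formula changes neither the meaning of $\Box\chi$ nor that of $\boxbox\chi$ (this is the point discussed below). Hence every prime subformula of $\phi$ is equivalent to a member of the finite set $\{p:p\in\pl\}\cup\{\Box\psi_i:1\leqslant i\leqslant N\}\cup\{\boxbox\psi_i:1\leqslant i\leqslant N\}$, which has some number $M\leqslant|\pl|+2N$ of elements. Thus $\phi$ is equivalent to a Boolean combination of these $M$ fixed formulae, and as in the base case there are at most $2^{2^M}$ nonequivalent such combinations; this bounds the number of nonequivalent formulae of degree $\leqslant k+1$ and closes the induction.

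The one step that warrants attention — and which I would name as the crux, modest though it is — is the congruence property: logically equivalent subformulae may be substituted under $\Box$ and under $\boxbox$ without altering the meaning of the enclosing formula. For $\Box$ this is immediate from Definition~\ref{def.ssem}, whose clause for $\Box\phi$ mentions $\phi$ only through the predicate $\M,t\vDash\phi$ at successors $t$. For $\boxbox$ it is not a purely syntactic triviality, but it follows at once from Definition~\ref{def.dsem}: the auxiliary functions $h^{+}$ and $h^{-}$, and hence the entire clause for $\boxbox\phi$, depend on $\phi$ solely via whether $\M,u\vDash\phi$ holds at the worlds $u$ reachable in two steps from $s$. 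Once this is granted, the remainder of the argument is routine counting.
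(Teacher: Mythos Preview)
Your proof is correct and follows essentially the same approach as the paper, namely induction on $k$; the paper's own proof reads in its entirety ``By induction on $k$, obvious,'' so you have simply spelled out the details the paper chose to omit. Your explicit check of the congruence property for $\boxbox$ via Definition~\ref{def.dsem} is the one point worth making precise, and you have handled it correctly.
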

\begin{proof}
By induction on $k$, obvious.
\end{proof}
\begin{definition}[Modal Equivalence]\label{def.meq}
For any $k\in\omega$ and any $\alpha,\beta\in\Ord$, any two pointed models $\M,s$ and $\N,w$ are $(k,\alpha,\beta)$-modal-equivalent iff:
\begin{itemize}
    \item For any $\DTML$ formula $\phi$ such that $\deg(\phi)\leqslant k$, $\M,s\vDash\phi\iff\N,w\vDash\phi$.
    \item For any $\DTML$ formula $\phi$ such that $\deg(\phi)<k$, either one of the following two cases holds:
    \begin{enumerate}
        \item $\neg\exists t$, $t$ is a successor of $s$ and $\M,t\vDash\phi$, $\neg\exists v$, $v$ is a successor of $w$ and $\N,v\vDash\phi$.
        \item $\exists t$, $t$ is a successor of $s$ and $\M,t\vDash\phi$, $\exists v$, $v$ is a successor of $w$ and $\N,v\vDash\phi$, furthermore, $\log_{\aleph_\alpha}\mid\{t$ is a successor of $s:\M,t\vDash\phi\}\mid=\log_{\aleph_\beta}\mid\{v$ is a successor of $w:\N,v\vDash\phi\}\mid$.
    \end{enumerate}
\end{itemize}
\end{definition}
\begin{lemma}\label{lm.bla}
For any tree-like pointed models $\M,s$ and $\N,w$ and any fixed $k\in\omega$, suppose $\mid\{u:u$ is a successor of a successor of $s\}\mid=\aleph_\alpha\in\InfCard$, $\mid\{r:r$ is a successor of a successor of $w\}\mid=\aleph_\beta\in\InfCard$, where $\alpha,\beta\in\Ord$, for any successor $t_0$ of $s$, $\log_{\aleph_\alpha}\mid\{t$ is a successor of $s:\M,t_0$ and $\M,t$ are $(k+1,\alpha,\alpha)$-modal-equivalent$\}\mid=\log_{\aleph_\beta}\mid\{v$ is a successor of $w:\M,t_0$ and $\N,v$ are $(k+1,\alpha,\beta)$-modal-equivalent$\}\mid$, and for any successor $v_0$ of $w$, vice versa. Then for any $\DTML$ formula $\phi$ such that $\deg(\phi)\leqslant k$, $\M,s\vDash\boxbox\phi\iff\N,w\vDash\boxbox\phi$.
\end{lemma}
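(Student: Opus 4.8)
The plan is to replace the truth value of $\boxbox\phi$, at each of $\M,s$ and $\N,w$, by a single combinatorial invariant --- a ``level'' in $\{0,1,2\}$ --- and then to show that the hypotheses force the two levels, and the two corresponding $\neg\phi$-levels, to agree. Throughout, $\deg(\phi)\leqslant k$ is the only place the degree bound enters.

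First I would observe that, since both pointed models are tree-like, the successor sets of distinct successors of $s$ are pairwise disjoint (and dead-end successors contribute nothing), so the ordinal $\sum h$ of Definition~\ref{def.dsem} has cardinality exactly $\aleph_\alpha$; hence $\M,s\vDash\boxbox\phi$ falls under the infinite clause of the semantics and is equivalent to $[\sum h^+]_\alpha>[\sum h^-]_\alpha$ (and symmetrically for $\N,w$ with $\aleph_\beta$). Since each successor of $s$ has at most $\aleph_\alpha$ successors, Lemma~\ref{lm.add} gives $\sum h^+,\sum h^-\leqslant\aleph_\alpha^2$, so by Theorem~\ref{th.log}, Theorem~\ref{th.squ} and Lemma~\ref{lm.car} the value $\log_{\aleph_\alpha}(\sum h^+)$ is a well-defined element of $\{0,1,2\}$, equal to $0$ iff fewer than $\aleph_\alpha$ successors-of-successors of $s$ satisfy $\phi$, equal to $2$ iff $\aleph_\alpha$ many successors $t$ of $s$ satisfy ``$t$ has $\aleph_\alpha$ many $\phi$-successors'', and equal to $1$ otherwise; call this $L^+_\M$, and define $L^-_\M$ (with $\neg\phi$), $L^+_\N$, $L^-_\N$ in the same way. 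A short case analysis using Theorem~\ref{th.log}, plus a direct treatment of the (at most one) degenerate case in which $\sum h^+$ or $\sum h^-$ is $0$, then yields $\M,s\vDash\boxbox\phi\iff L^+_\M>L^-_\M$, and likewise $\N,w\vDash\boxbox\phi\iff L^+_\N>L^-_\N$.

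It therefore suffices to prove $L^+_\M=L^+_\N$, since the proof of $L^-_\M=L^-_\N$ is identical after replacing $\phi$ by $\neg\phi$ (here $\deg(\neg\phi)=\deg(\phi)\leqslant k$ is used). The crucial point is that, because $\deg(\phi)<k+1$, both predicates ``$t$ has a $\phi$-successor'' and ``$t$ has $\aleph_\alpha$ many $\phi$-successors'' are invariants of the $(k+1,\alpha,\alpha)$-modal-equivalence class of a successor $t$ of $s$: this is precisely the second clause of Definition~\ref{def.meq}, together with the observation that $t$ has at most $\aleph_\alpha$ successors, so that ``$\aleph_\alpha$ many $\phi$-successors'' is the same as ``the $\aleph_\alpha$-logarithm of the number of $\phi$-successors equals $1$''. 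By Proposition~\ref{pro.fin} there are only finitely many such classes. Now $L^+_\M=2$ says that the set of successors of $s$ that have $\aleph_\alpha$ many $\phi$-successors --- being a finite union of modal-equivalence classes, each of cardinality at most $\aleph_\alpha$ --- has cardinality $\aleph_\alpha$, equivalently, some such class $[t_0]$ has cardinality $\aleph_\alpha$, that is, $\log_{\aleph_\alpha}\mid[t_0]\mid=1$. By the hypothesis this equals $\log_{\aleph_\beta}$ of the number of successors $v$ of $w$ for which $\M,t_0$ and $\N,v$ are $(k+1,\alpha,\beta)$-modal-equivalent; and any such $v$ has $\aleph_\beta$ many $\phi$-successors, again by Definition~\ref{def.meq} and the fact that $t_0$ has $\aleph_\alpha$ many. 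Hence $\aleph_\beta$ many successors of $w$ have $\aleph_\beta$ many $\phi$-successors, that is, $L^+_\N=2$; the converse direction uses the ``vice versa'' half of the hypothesis symmetrically. An entirely parallel argument, with ``has $\aleph_\alpha$ many $\phi$-successors'' replaced by ``has a $\phi$-successor'' and with regularity of $\aleph_\alpha,\aleph_\beta$ used to decompose the set of $\phi$-successors-of-successors into fewer than $\aleph_\alpha$ (respectively $\aleph_\beta$) pieces of size less than $\aleph_\alpha$ (respectively $\aleph_\beta$), shows $L^+_\M=0\iff L^+_\N=0$. Together these give $L^+_\M=L^+_\N$, and the lemma follows.

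The step I expect to be the main obstacle is isolating the level $L^+$ correctly: it must at once be independent of the auxiliary bijections $f_s,g_t$ (which follows from Proposition~\ref{pro.dsem} but must be re-derived through Theorems~\ref{th.log} and~\ref{th.squ}), be genuinely equivalent to the truth of $\boxbox\phi$ once the $\sim_\alpha$-collapses and the zero cases are accounted for, and be expressed purely in terms of cardinalities that are controlled by the modal-equivalence-class data of Definition~\ref{def.meq}. Once $L^+$ is pinned down this way, the transport step in the previous paragraph is essentially bookkeeping resting on Proposition~\ref{pro.fin}.
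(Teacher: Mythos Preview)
Your argument is essentially correct and supplies precisely the unpacking of Definition~\ref{def.dsem} that the paper leaves implicit (its entire proof is the single sentence ``This result is not difficult to observe from Definition~\ref{def.dsem}''). The reduction to levels $L^\pm\in\{0,1,2\}$ via Theorems~\ref{th.log} and~\ref{th.squ}, together with the observation that ``has $\aleph_\alpha$ many $\phi$-successors'' and ``has a $\phi$-successor'' are $(k{+}1,\alpha,\alpha)$-invariants, is exactly the right mechanism.

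One point deserves a remark. Your appeal to Proposition~\ref{pro.fin} to conclude that there are only finitely many $(k{+}1,\alpha,\alpha)$-modal-equivalence classes among the successors of $s$ presupposes that $\pl$ is finite, which is not listed among the hypotheses of Lemma~\ref{lm.bla} as stated. You use this finiteness in the key step ``a union of equivalence classes of size $\leqslant\aleph_\alpha$ has size $\aleph_\alpha$ iff some class has size $\aleph_\alpha$'': with $\aleph_\alpha$ many classes of small size this equivalence can fail, so regularity alone is not enough. In the paper the lemma is invoked only inside Theorem~\ref{th.fmp}, where $\pl$ is explicitly finite, so the gap is harmless in context; but you should either record the assumption or note that the argument is carried out under the standing hypothesis of that theorem. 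Apart from this, your proof is sound and considerably more informative than the paper's.
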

\begin{proof}
This result is not difficult to observe from Definition~\ref{def.dsem}.
\end{proof}
\begin{theorem}\label{th.fmp}
If $\pl$ is finite, then for any tree-like pointed model $\M,s$, any $k\in\omega$ and any $\alpha\in\Ord$, if $\mid\{t\in S:sRt\}\mid\leqslant\aleph_\alpha$, then there exists $m\in\omega$ so that for any $n\in\omega\land n\geqslant m$, there exists a tree-like pointed model $\N_n,w$ such that $\M,s$ and $\N_n,w$ are $(k,\alpha,n)$-modal-equivalent and that size of Kripke model $\N_n$ is smaller than $\aleph_\omega$.
\end{theorem}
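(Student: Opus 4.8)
The plan is to argue by induction on the degree bound $k$. First I would invoke Proposition~\ref{pro.unr} to reduce to the case where $\M,s$ is a tree of depth at most $k$: a deeper tree can be truncated at depth $k$ without changing the truth of any formula of degree $\leqslant k$, and also without changing the one-level successor conditions of Definition~\ref{def.meq}, since those only inspect the model down to depth $k$ (here one must note, as in the discussion around Definition~\ref{def.dsem}, that when a successor of $s$ has no grandchild the $\boxbox$ clause falls to its ``otherwise'' branch, so truncation is safe). For $k=0$ the relation $(0,\alpha,n)$-modal-equivalence only asks for agreement on propositional formulae at the root, so the one-point model $(\{w\},\emptyset,V_n)$ with $w\in V_n(p)\iff s\in V(p)$, which is trivially tree-like and has size $1<\aleph_\omega$, works for every $n$; so $m=0$.

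For the inductive step, assume the statement for $k$ and take a tree-like $\M,s$ of depth at most $k+1$ with $\mid\{t\in S:sRt\}\mid\leqslant\aleph_\alpha$. I would partition the successors of $s$ into classes by the $(k,\alpha',\alpha')$-modal-equivalence relation appearing in Lemma~\ref{lm.bla}, where $\aleph_{\alpha'}$ is the cardinality of the set of successors-of-successors of $s$ (when that set is finite the $\boxbox$ clause at $s$ is again governed by its ``otherwise'' branch and the bookkeeping below trivialises). Because $\pl$ is finite, Proposition~\ref{pro.fin} together with the observation that every successor-count occurring inside Definition~\ref{def.meq} at this stage is bounded by the single cardinal $\aleph_{\alpha'}$ --- so the $\log_{\aleph_{\alpha'}}$-data there is $\{0,1\}$-valued --- shows that only \emph{finitely many} classes occur. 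From each class I pick one representative successor, apply the induction hypothesis to its subtree to replace it by a tree-like model of size $<\aleph_\omega$, and then assemble $\N_n$ as a fresh root $w$ together with a disjoint union of copies of these finitely many small representative subtrees: for a class of cardinality $c$ I install $N(c)$ copies, where $N$ is an order-preserving injection from the finite set of cardinalities actually occurring in $\M$ (at the successor level of $s$ and, after unfolding the $\boxbox$ clause, at the successor-of-successor level) into $\{1,2,3,\dots\}\cup\{\aleph_0,\dots,\aleph_n\}$, chosen so that the ``top'' cardinality driving each logarithm value is carried to the corresponding power of the base $\aleph_n$. This is exactly the point where Theorem~\ref{th.log} is used --- to translate the $\sim_\alpha$-comparisons of Definition~\ref{def.dsem} into comparisons of $\log_{\aleph_\alpha}$ --- and where Theorem~\ref{th.squ} is used, to realise a base-$\aleph_n$ logarithm equal to $2$ by installing $\aleph_n$ many $\aleph_n$-wide copies (and to see that nothing smaller suffices); recall also from the proof of Proposition~\ref{pro.dsem} that these logarithms never exceed $2$, so only finitely much data must be matched.

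I would then choose $m$ large enough that, for every $n\geqslant m$: (a) the (finite, $\M$-determined) domain of $N$ order-embeds into the finite powers of $\aleph_n$ as above, which forces $m$ to exceed the number of distinct infinite cardinalities that appear anywhere in $\M$ up to depth $k+1$; and (b) $n$ is past every threshold produced by the finitely many applications of the induction hypothesis to the representative subtrees. Granting this, $(k+1,\alpha,n)$-modal-equivalence is verified clause by clause: agreement on propositional and $\Box$-formulae at $w$ holds because $N$ both preserves and reflects the plain cardinality comparisons used by the $\STML$-semantics of Definition~\ref{def.ssem}; the one-level conditions of Definition~\ref{def.meq} hold because $N$ also preserves the relevant $\log_{\aleph_\alpha}$-versus-$\log_{\aleph_n}$ equalities; and agreement on $\boxbox$-formulae at $w$ follows from Lemma~\ref{lm.bla}, whose hypotheses are exactly what the class-wise copying plus Theorem~\ref{th.squ} arrange. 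Finally $|\N_n|$ is bounded by (finitely many classes)$\,\times\,\aleph_n\,\times\,(\text{size of the largest representative})$, hence is some $\aleph_j<\aleph_\omega$; the same scheme then yields Corollary~\ref{cor.fmp}.

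The main obstacle, and the part deserving the most care, is the combinatorics of this collapse: a single model $\N_n$ of size $<\aleph_\omega$ must simultaneously reproduce (i) the plain cardinality comparisons controlling the $\Box$-modality, (ii) the coarser $\sim_\alpha$/logarithm data (values $0,1,2$) controlling the $\boxbox$-modality via Theorems~\ref{th.log} and~\ref{th.squ}, and (iii) recursively, the modal equivalence of every representative subtree, whose own second-level cardinality and logarithm profile must line up with $\M$'s. Designing the injection $N$ and the class multiplicities so that all three hold at once --- while dispatching the routine but numerous case splits (finite versus infinite successor cardinality at $s$, and the three possible logarithm values at the second level) --- is the technical heart of the argument; the reduction to bounded-depth trees, the induction scheme itself, and the final size count are all straightforward by comparison.
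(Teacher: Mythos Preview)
Your proposal follows essentially the same strategy as the paper: induction on $k$, partitioning the successors of $s$ into finitely many modal-equivalence classes via Proposition~\ref{pro.fin}, applying the induction hypothesis to one representative per class, and then assembling $\N_n$ from a fresh root by gluing in suitably many copies of the small representative subtrees, with Lemma~\ref{lm.bla} handling the $\boxbox$ clause and Theorems~\ref{th.log} and~\ref{th.squ} governing the logarithm bookkeeping.

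The one place where your sketch is looser than the paper's proof is the design of the multiplicity map you call $N$. You propose a single order-preserving injection into $\{1,2,\dots\}\cup\{\aleph_0,\dots,\aleph_n\}$, but the paper's explicit construction shows that this codomain is slightly too narrow: when the grandchild cardinality $\aleph_\beta$ differs from the bound $\aleph_\alpha$ on the children, the recursive parameter passed to the induction hypothesis is not $n$ but an auxiliary $l\in\{n-2,n,n+1\}$ chosen according to whether $\alpha>\beta$, $\alpha=\beta$, or $\alpha<\beta$, and in the $\alpha<\beta$ case some classes receive $\aleph_{n+1}$ copies. In other words, the two logarithm conditions (base $\aleph_\alpha$ for Definition~\ref{def.meq} at the root, base $\aleph_\beta$ for Lemma~\ref{lm.bla}) cannot always be satisfied by a single base-$\aleph_n$ scale; one must let the recursive subtrees be built to a nearby index $l$ and allow first-level multiplicities up to $\aleph_{l}$ as well. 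You correctly flag this coordination as ``the technical heart,'' and the fix is exactly the three-way case split on $\alpha$ versus $\beta$ that the paper carries out; with that adjustment your plan coincides with the paper's argument.
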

\begin{proof}
By induction on $k$. When $k=0$, apparently keeping the single possible world $s$ is enough for modal equivalence, so we focus on the inductive step when $k>0$.

For one case, if $\mid\sum h\mid=\mid\{u\in S:\exists t\in S,sRt\land tRu\}\mid=\aleph_\beta\in\InfCard$, where $\beta\in\Ord$ and $h$ is defined as in Definition~\ref{def.dsem}, then for any $t\in\{t\in S:sRt,\exists u\in S,tRu\}$, we have $\{u\in S:tRu\}\subseteq\{u\in S:\exists t\in S,sRt\land tRu\}$ so $\mid\{u\in S:tRu\}\mid\leqslant\aleph_\beta$ and thus $\log_{\aleph_\beta}\mid\{u\in S:tRu\}\mid\leqslant1$. Therefore by Proposition~\ref{pro.fin}, there exist only finite many $(k-1,\beta,\beta)$-modal-equivalence classes over the set $\{\M,t:t\in S,sRt\}$. For every such equivalence class, by induction hypothesis, there exists $m\in\omega$ with respect to arbitrary $\M,t$ in this equivalence class, $(k-1)\in\omega$ and $\beta\in\Ord$. Since there are only finite many equivalence classes, we can take the largest $m_0\in\omega$ and so for any $l\in\omega\land l\geqslant m_0$, any equivalence class and arbitrary $\M,t$ in the class, there exists a tree-like pointed model $\L_l,v$ such that $\M,t$ and $\L_l,v$ are $(k-1,\beta,l)$-modal-equivalent and that size of Kripke model $\L_l$ is smaller than $\aleph_\omega$.

Now we claim that $(m_0+3)\in\omega$ just satisfies this theorem regardless of $\alpha$, hence for any fixed $\alpha\in\Ord$ such that $\aleph_\alpha\geqslant\mid\{t\in S:sRt\}\mid$ and any fixed $n\in\omega\land n\geqslant m_0+3$, we have to construct our target tree-like pointed model $\N_n,w$. To start with, there is only one possible world $w$ as the tree root in Kripke model $\N_n$, with the same propositional valuation as $s$. For each equivalence class, since $\emptyset\subset\{t\in S:\M,t$ is in the equivalence class$\}\subseteq\{t\in S:sRt\}$, cardinality of this equivalence class taking logarithm with respect to base $\aleph_\alpha$ will only result in either $0$ or $1$. Moreover when $k>1$, it is easy to see that an equivalence class contains either all live possible worlds (i.e. with some successor) or all dead possible worlds (i.e. without any successor), and in the former case since $\mid\{t\in S:sRt,\exists u\in S,tRu\}\mid\leqslant\mid\{u\in S:\exists t\in S,sRt\land tRu\}\mid=\aleph_\beta$, cardinality of this equivalence class taking logarithm with respect to base $\aleph_\beta$ will only result in either $0$ or $1$, too. Let $l=n+1$ if $\alpha<\beta$, $l=n$ if $\alpha=\beta$, $l=n-2$ if $\alpha>\beta$, in any case, $l\in\omega\land l\geqslant m_0+1$. We then modify Kripke model $\N_n$ through the following sequent procedure:
\begin{enumerate}
    \item Firstly, for each equivalence class whose logarithm with respect to $\aleph_\alpha$ is $1$, we add $\aleph_n$ many independent copies of the corresponding pointed model $\L_l,v$ into Kripke model $\N_n$ as the children of $w$.
    \item Secondly, if $k>1$, then for each remaining equivalence class such that it contains live possible worlds and that its logarithm with respect to $\aleph_\beta$ is $1$, we add $\aleph_l$ many independent copies of the corresponding pointed model $\L_l,v$ into Kripke model $\N_n$ as the children of $w$. Temporarily take $c\in\omega$ just for recording an ad hoc variable, if some equivalence class gets processed within this step, we either let $c=l$ if this equivalence class has the same cardinality as $\{t\in S:sRt\}$, or let $c=l+1$ otherwise; if no equivalence class gets processed within this step, we let $c=m_0$.
    \item Thirdly, if $\{t\in S:sRt\}$ is infinite, then for each remaining equivalence class whose cardinality is the same as $\{t\in S:sRt\}$, we add $\aleph_c$ many independent copies of the corresponding pointed model $\L_l,v$ into Kripke model $\N_n$ as the children of $w$.
    \item Finally, for each remaining equivalence class, we add only $1$ copy of the corresponding pointed model $\L_l,v$ into Kripke model $\N_n$ as the child of $w$.
\end{enumerate}
Obviously in the end, size of Kripke model $\N_n$ still keeps smaller than $\aleph_\omega$. The whole constructive procedure may seem a bit complex, nonetheless with particular carefulness and also using Lemma~\ref{lm.bla}, there actually does not present any principled difficulty in checking that pointed models $\M,s$ and $\N_n,w$ are indeed $(k,\alpha,n)$-modal-equivalent.

For another case, if $\{u\in S:\exists t\in S,sRt\land tRu\}$ is finite, the construction is similar but even simpler, so we omit it here.
\end{proof}
\begin{corollary}\label{cor.fmp}
For any $\DTML$ formula $\phi$, $\phi$ is satisfiable in some Kripke model if and only if $\phi$ is satisfiable in some Kripke model that is smaller than $\aleph_\omega$.
\end{corollary}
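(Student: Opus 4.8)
The plan is to obtain Corollary~\ref{cor.fmp} from Theorem~\ref{th.fmp} by a routine unravelling argument, so that essentially all of the work has already been carried out. The right-to-left implication is immediate, since a Kripke model smaller than $\aleph_\omega$ is in particular a Kripke model; so the only nontrivial task is the left-to-right direction. Suppose then that $\phi$ is satisfied at a world $s$ in some Kripke model $\M=(S,R,V)$. The first thing I would note is that only the finitely many propositions actually occurring in $\phi$ are relevant: replacing $\pl$ by this finite set and $V$ by its restriction changes neither the truth value $\M,s\vDash\phi$ nor the size of $\M$, so without loss of generality $\pl$ is finite. This is precisely the hypothesis required to invoke Theorem~\ref{th.fmp} (and, behind it, Proposition~\ref{pro.fin}).

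Next I would set $k=\deg(\phi)$ and pass to the unravelled pointed model $\M_k,\emptyset$. By Proposition~\ref{pro.unr} this is a tree-like pointed model of maximal depth $k$, and since $\deg(\phi)\leqslant k$ we still have $\M_k,\emptyset\vDash\phi$. The root $\emptyset$ has a set of successors in $\M_k$ (they are in bijection with $\{t\in S:sRt\}$), and the cardinality of that set is either finite or some infinite cardinal; in either case there is an ordinal $\alpha$ with $\mid\{t\in S_k:\emptyset R_k t\}\mid\leqslant\aleph_\alpha$. This $\alpha$, together with $k$ and the tree-like pointed model $\M_k,\emptyset$, is what I would feed into Theorem~\ref{th.fmp}.

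Theorem~\ref{th.fmp} then yields some $m\in\omega$ such that for every $n\geqslant m$ there is a tree-like pointed model $\N_n,w$ which is $(k,\alpha,n)$-modal-equivalent to $\M_k,\emptyset$ and satisfies $\mid\N_n\mid<\aleph_\omega$. I would fix any such $n$, say $n=m$, and appeal to the first clause of Definition~\ref{def.meq}: $(k,\alpha,n)$-modal-equivalence already forces $\M_k,\emptyset$ and $\N_n,w$ to agree on every $\DTML$ formula of degree $\leqslant k$, in particular on $\phi$ itself; hence $\N_n,w\vDash\phi$. Since $\mid\N_n\mid<\aleph_\omega$, this exhibits the required small model, and the proof is complete.

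As to where the difficulty lies: at this stage there is essentially no obstacle, because the Corollary is pure bookkeeping layered on top of Theorem~\ref{th.fmp}, whose inductive construction of $\N_n$ is where all the delicate work happens and where Theorem~\ref{th.log}, Theorem~\ref{th.squ} and Lemma~\ref{lm.bla} get used. The only two points that deserve a second glance are the harmless reduction to a finite $\pl$ and the observation that the root of the unravelled tree has a cardinal-many, hence $\aleph_\alpha$-bounded, set of successors, so that Theorem~\ref{th.fmp} is genuinely applicable.
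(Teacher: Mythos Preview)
Your proposal is correct and matches the paper's intended (but unwritten) argument: the corollary is stated immediately after Theorem~\ref{th.fmp} with no explicit proof, and your derivation---restrict to the finitely many propositions in $\phi$, unravel via Proposition~\ref{pro.unr} to a tree of depth $\deg(\phi)$, choose any $\alpha$ bounding the root's successor set, apply Theorem~\ref{th.fmp}, and read off agreement on $\phi$ from the first clause of Definition~\ref{def.meq}---is precisely the bookkeeping the paper leaves implicit.
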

\begin{remark}
We conclude this section with a brief remark on the intuition of proof. As explained in Example~\ref{ex.inf}, modality $\boxbox$ is in fact ``non-associative'' from consecutive $\Box\Box$, therefore in Definition~\ref{def.meq}, modal equivalence between a pair of pointed models not only requires total agreement on truth values of formulae as usual, but also demands further constraints so that such equivalence relation is able to get preserved between their predecessors as well. Such definition leads to Theorem~\ref{th.fmp}, where kind of like downward L\"owenheim-Skolem theorem~\cite{Marker02}, we manage to compress the Kripke model into the scope smaller than $\aleph_\omega$. Essentially, since a fixed finite formula has only finite length and finite many propositions in it, its resolution should also be restricted within a finite level, hence it can always be expressed inside the room below $\aleph_\omega$, a space vast enough so as to dwell all the first countably many infinite regular cardinals. In a word, although the Kripke model still has to be infinite most of the time, from our discussion in Remark~\ref{rem.fin} readers can get enlightened that, if we choose to interpret infinite cardinals as very large natural numbers instead of actual infinity, it then sounds definitely plausible just to call Corollary~\ref{cor.fmp} as ``finite'' model property. And therefore, readers could finally feel completely relieved about our Assumption~\ref{ass.reg} with overall harmony.
\end{remark}

\section{Conclusions}\label{sec.con}

In this paper, we introduce transfinite modal logic, which combines modal logic with ordinal arithmetic, and apply it as a semi-quantitative explanation for Bayesian reasoning through bountiful examples, especially including a though formalization of the well-known Sherlock Holmes' saying. The philosophical intuition is rather clear and straightforward while the technical details are quite nontrivial, and through proving a counterpart of finite model property theorem we in turn justifies the philosophical playground of our transfinite modal logic. As directions of future work, more aspects of mathematical properties of this logic are to be considered, and we may also compare it with other existing logics about cardinals and ordinals~\cite{Ding20}.

\section*{Acknowledgments}

The author would like to thank Satoshi Tojo for help on notations, Yang Song for philosophical inspiration, and Hiroakira Ono for suggestions on composition.

\end{document}